\definecolor{LightCyan}{rgb}{0.88,1,1}
\newcommand{\vct}{\boldsymbol }
\newcommand{\mat}{\mathbf}
\newcommand{\nml}{\mathcal{N}}
\renewcommand{\span}{\mathrm{span}}
\newcommand{\argmin}{\mathrm{argmin}}
\newcommand{\conv}{\mathrm{conv}}
\newcommand{\range}{\mathrm{range}}
\newcommand{\aff}{\mathrm{aff}}
\renewcommand{\range}{\mathrm{Range}}
\def\R{\mathbb{R}}
\def\cS{\mathcal{S}}
\newtheorem{thm}{Theorem}[section]
\newtheorem{lem}{Lemma}[section]
\newtheorem{prop}{Proposition}[section]
\newtheorem{asmp}{Assumption}[section]
\newtheorem{defn}{Definition}[section]
\numberwithin{equation}{section}
\begin{document}

% If your paper is accepted and the title of your paper is very long,
% the style will print as headings an error message. Use the following
% command to supply a shorter title of your paper so that it can be
% used as headings.
%
%\runningtitle{I use this title instead because the last one was very long}

% If your paper is accepted and the number of authors is large, the
% style will print as headings an error message. Use the following
% command to supply a shorter version of the authors names so that
% they can be used as headings (for example, use only the surnames)
%
%\runningauthor{Surname 1, Surname 2, Surname 3, ...., Surname n}

\twocolumn[

\aistatstitle{ Graph Connectivity in Noisy Sparse Subspace Clustering}

\aistatsauthor{ Yining Wang, Yu-Xiang Wang and Aarti Singh }

\aistatsaddress{ Machine Learning Department, School of Computer Science, Carnegie Mellon University } 

]

\begin{abstract}
   Subspace clustering is the problem of clustering data points into a union of low-dimensional linear/affine subspaces.
   It is the mathematical abstraction of many important problems in computer vision, image processing and machine learning.
   %It is the mathematical abstraction of many important problems in computer vision, image processing and has been drawing avid attention in machine learning and statistics recently. 
   A line of recent work \cite{ssc,sc-geometric,wang2015noisy,robust-ssc} provided strong theoretical guarantee for sparse subspace clustering \cite{ssc},
   the state-of-the-art algorithm for subspace clustering,
   on both noiseless and noisy data sets.
   It was shown that under mild conditions, with high probability no two points from different subspaces are clustered together.
%  The focus of these work has been getting milder conditions under which SSC obeys ``self-expressiveness property'', which ensures that no two points from different subspaces can be clustered together.
   Such guarantee, however, is \emph{not} sufficient for the clustering to be correct, due to the notorious ``graph connectivity problem'' \cite{graph-connectivity}. 
   In this paper, we investigate the graph connectivity problem for \emph{noisy} sparse subspace clustering
   and show that a simple post-processing procedure is capable of delivering consistent clustering under certain ``general position" or ``restricted eigenvalue" assumptions.
   We also show that our condition is almost tight with adversarial noise perturbation by constructing a counter-example.
   These results provide the first \emph{exact clustering} guarantee of noisy SSC for subspaces of dimension greater then $3$.
  % In this paper, we partially resolve this issue by analyzing a very simple post-processing procedure under only a mild ``general position'' assumption. 
   %We also show that the approach is robust to arbitrary bounded perturbation of the data whenever the ``general position'' assumption holds with a margin. 
   %These results provide the first \emph{exact clustering} guarantee of SSC for subspaces of dimension greater than $3$.
\end{abstract}

\section{INTRODUCTION}

The problem of subspace clustering originates from numerous applications in computer vision and image processing, where there are either physical laws or empirical evidence that ensure a given set of data points to form a union of linear or affine subspaces.
 Such data points could be feature trajectories of rigid moving objects captured by an affine camera~\cite{ssc}, articulated moving parts of a human body~\cite{yan2006general}, illumination of different convex objects under Lambertian model~\cite{ho2003clustering} and so on. 
 %In this case, clustering data points according to their subspace memberships directly reveals their underlying sources.  
 Subspace clustering is also more generically used in agnostic learning of the best linear mixture structures in the data. 
 %A much wider array of applications fall into this category. 
 For instance, it is used for images/video compression~\cite{hong2006multiscale}, hybrid system identification, disease identification~\cite{mcwilliams2014subspace} as well as modeling social network communities~\cite{chen2014clustering}, studying privacy in movie recommendations~\cite{zhang2012guess} and inferring router network topology~\cite{eriksson2012high}.

 %This can be thought of as a natural generalization of the low-rank matrix factorization models.

%Algorithmic and computational research on subspace clustering dates back to the Expectation-Maximization methods for learning K-plane~\cite{bradley2000k}, Q-flats models~\cite{tseng2000qflat} and the power factorization \cite{vidal2004motion} in the early 2000s. Theoretically justified works start to appear in the past decade, e.g., generalized principal component analysis (GPCA) \cite{gpca}, spectral curvature clustering \cite{chen2009spectral}, sparse subspace clustering (SSC) \cite{ssc}low-rank representation (LRR) \cite{liu2010robust} and more recently in subspace clustering via thresholding (TSC) \cite{heckel2013subspace} and greedy subspace clustering (GSC) \cite{park2014greedy}.
There is rich literature on algorithmic and theoretical analysis of subspace clustering \cite{ssc,liu2010robust,heckel2013subspace,park2014greedy}.
Among the many algorithms, sparse subspace clustering (SSC)~\cite{ssc} is arguably the most well-studied due to its elegant formulation, strong empirical performance and provable guarantees to work under relatively weak conditions. The algorithm involves constructing a sparse linear representation of each data point using the remaining dataset as a dictionary. This approach embeds the relationship of the data points into a sparse graph and the intuition is that the data points are likely to choose \emph{only} those points on the same subspace to linearly represent itself. Then clustering  can be obtained by finding connected components of the graph, or more robustly, using spectral clustering \cite{ssc}.

Assuming data lie exactly or approximately on a union of linear subspaces,
\footnote{affine subspaces are handled by augmenting $1$ to every data point.}
 it is shown in \cite{ssc,sc-geometric,wang2015noisy,robust-ssc} that under certain separation conditions, this embedded graph will have no edges between any two points in different subspaces. This criterion of success is referred to as the ``Self-Expressiveness Property (SEP)'' \cite{ssc,wang2015noisy} and ``Subspace Detection Property (SDP)'' \cite{sc-geometric}. The drawback is that there is no guarantee that the vertices within one cluster form a connected component. Therefore, the solution may potentially over segment the data points. This subtle point was originally raised and partially addressed in \cite{graph-connectivity}, reaching an answer that when data are noiseless and intrinsic subspace dimension $d\leq 3$, such over-segmentation will not occur as long as all points within the same subspace are in general position; but when $d\geq4$, a counter example was provided, showing that this weak ``general position'' condition is no longer sufficient.
 
 In this paper, we revisit the graph connectivity problem for \emph{noisy} sparse subspace clustering.
 Inspired by the post-merging step presented in \cite{ssc} for noiseless data,
 we propose in this paper a variant of noisy sparse subspace clustering \cite{noisy-ssc} that provably produces perfect clustering with high probability, 
 under certain ``general position" or ``restricted eigenvalue" assumptions.
 We also provide a counter-example to show that our derived success conditions are almost tight under the adversarial noise perturbation model.
 This is the first time a subspace clustering algorithm is proven to give correct clustering under no statistical assumptions on data corrupted by noise.
 To the best of our knowledge, this is also the first guarantee for Lasso that lower bounds the number of discoveries,
 which might be of independent interest for other problems that uses Lasso as a subroutine.

\subsection{Problem setup and notations}\label{subsec:notation}
For a vector $\vct x$ we use $\|\vct x\|_p = (\sum_i{\vct x_i^p})^{1/p}$ to denote its $p$-norm.
If $p$ is not explicitly specified then the 2-norm is used. The noiseless data matrix is denoted as $\mat X=(\vct x_1,\cdots,\vct x_N)\in\mathbb R^{n\times N}$ where $n$ is the ambient dimension and $N$ denotes the number of data points available.
Each data point $\vct x_i\in\mathbb R^n$ is normalized so that it has unit two norm.
We use $\mathcal S\subseteq\mathbb R^n$ to denote a low-dimensional linear subspace in $\mathbb R^n$ and $\mat S\in\mathbb R^{n\times d}$
for an orthonormal basis of $\mathcal S$, where $d$ is the intrinsic rank of $\mathcal S$.
For subspace clustering it is assumed that each data point $\vct x_i$ lies on a union of underlying subspaces $\bigcup_{\ell=1}^L{\mathcal S^{(\ell)}}$
with intrinsic dimensions $d_1,\cdots,d_L < n$.
We use $z_1,\cdots,z_N\in\{1,2,\cdots,L\}$ to denote the ground truth cluster assignments of each data point in $\mat X$
and $\mat X^{(\ell)}=\{\vct x_i\in\mat X: z_i=\ell\}$ to denote all data points in the $\ell$th cluster.
Define $d(\vct x_i,\mathcal S)=\inf_{\vct y\in\mathcal S}{\|\vct x-\vct y\|_2}$ as the distance between a point $\vct x$ and a linear subspace $\mathcal S$.
Since $\mat X$ is noiseless, we have $d(\vct x_i,\mathcal S^{(z_i)}) = 0$.
The objective of subspace clustering is to recover $\{\mathcal S^{(\ell)}\}_{\ell=1}^L$ and $\{z_i\}_{i=1}^N$ up to permutations.

Under the fully deterministic data model \cite{sc-geometric} no additional stochastic model is assumed on either the underlying subspaces or the data points.
For noisy subspace clustering we observe a noise-perturbed matrix $\mat Y=(\vct y_1,\cdots,\vct y_N)\in\mathbb R^{n\times N}$ where $\vct y_i=\vct x_i+\vct\varepsilon_i$.
The noise variables $\{\vct\varepsilon_i\}_{i=1}^N$ considered previously can be either deterministic (i.e., adversarial) or stochastic (e.g., Gaussian white noise) \cite{wang2015noisy,robust-ssc}.

Given ground-truth clustering $\{z_i\}_{i=1}^N\subseteq\{1,\cdots L\}$, a similarity graph $\mat C\in\mathbb R^{N\times N}$ satisfies \emph{Self-Expressiveness Property} (SEP, \cite{ssc})
if $|\mat C_{ij}| > 0$ implies $z_i=z_j$.
Note that the reverse is not necessarily true.
That is, $z_i=z_j$ does \emph{not} imply $|\mat C_{ij}| > 0$.
%SEP is also referred to as a ``no false positive" property in the literature \cite{noisy-ssc}.

\section{RELATED WORK}

% Describe the hierarchies of provable guarantees for all subspace clustering methods other methods
The pursuit of provable subspace clustering methods has seen much progress recently. Theoretical guarantees for several algorithms have been established in many regimes.
%regimes well-beyond the original independent subspace assumption.
%\footnote{See Table \ref{tab:subspace} for a precise definition of independent subspaces.}
 At times it may get confusing what these results actually mean. In this section, we first review the different assumptions and claims in the literature and then pinpoint what our contributions are.

Table~\ref{tab:subspace} lists the hierarchies of assumptions on the subspaces. Each row is weaker than its previous row.
Except for the independent subspace assumption, which on its own is sufficient, results for more general models typically require additional conditions on the subspaces and data points in each subspaces. For instance, the ``semi-random model'' assumes data points to be drawn i.i.d.~uniformly at random from the unit sphere in each subspace and the more generic ``deterministic model'' places assumptions on the radius of the smallest inscribing sphere of the symmetric polytope spanned by data points \cite{sc-geometric} or the smallest non-zero singular value of the data matrix \cite{wang2013lrssc}.
Related theoretical guarantees of subspace clustering algorithms in the literature are summarized in Table~\ref{tab:reviews} where the assumptions about subspaces are denoted with capital letters ``A, B, C''; different noise settings are referred to using lowercase letters ``a,b,c'' in Table~\ref{tab:datapoint_assumptions}. Results that are applicable to SSC are highlighted.

\begin{table}[t]
	\centering
	\caption{The hierarchies of assumptions on the subspaces. 
	$A$: independent subspaces;
	$B$: disjoint subspaces\textsuperscript{*};
	$C$: overlapping subspaces\textsuperscript{*}.
	Note that $A \subset B \subset C$. 
	Superscript $^*$ indicates that additional separation conditions are needed.}
	\vskip 0.1in
	\begin{tabularx}{0.45\textwidth}{l|X}
		%\hline
		% after \\: \hline or \cline{col1-col2} \cline{col3-col4} ...
		A & $\dim\left[\cS_1\otimes...\otimes \cS_L\right] = \sum_{\ell=1}^L \dim\left[\cS_\ell\right]  $.  \\\hline
		B &  $\cS_\ell\cap \cS_{\ell'} =\mathbf{0}$ for all $\{(\ell,\ell')|\ell\neq \ell'\}$.\\\hline
		C & $\dim(\cS_\ell\cap \cS_{\ell'}) < \min \left\{\dim(\cS_\ell), \dim(\cS_{\ell'})\right\}$ \newline
		for all $\{(\ell,\ell')|\ell\neq \ell'\}$.\\
		%D & $\cS_\ell \neq \cS_{\ell'}$ if $\ell\neq \ell'$.\\
		%\hline
	\end{tabularx}
\label{tab:subspace}
	
	\caption{Reference of assumptions on data points. Columns correspond to data point generation assumptions and rows correspond to different noise regimes. }\label{tab:datapoint_assumptions}	
	\vskip 0.1in
	\begin{tabular}{c|c|c}
		& 1. Semi-Random & 2.Deterministic \\
		\hline a. noiseless & $\vct\varepsilon_i=\vct 0$& $\vct\varepsilon_i=\vct 0$\\
		\hline b. stochastic& $\vct\varepsilon_i\sim \nml(0,\sigma^2\mat I)$& $\vct\varepsilon_i\sim\nml(0,\sigma^2\mat I)$\\
		\hline c. adversarial& $\|\vct\varepsilon_i\|_2\leq \xi$& $\|\vct\varepsilon_i\|_2\leq \xi$\\
	\end{tabular}
	%\begin{tabular}{c|c|c|c}
	%			 & a. noiseless & b. stochastic noise / Gaussian & c. bounded adversarial noise \\
	%	\hline 1. Semi-Random Model &$\vct\varepsilon_i =\vct 0$ & $\vct\varepsilon_i \sim \nml(0,\sigma^2\mat I)$ & $\|\vct\varepsilon_i\|_2\leq\xi$ \\
	%	\hline 2. Deterministic Model & $\vct\varepsilon_i =\vct 0$ & $\vct\varepsilon_i \sim \nml(0,\sigma^2\mat I)$ & $\|\vct\varepsilon_i\|_2\leq\xi$\\
	%\end{tabular}
\end{table}

As we can see from the second column of Table~\ref{tab:reviews}, SEP guarantees have been quite exhaustively studied and now we understand very well the conditions under which it holds. Specifically, most of the results are now near optimal under the semi-random model: SEP holds in cases even when different subspaces substantially overlap, have canonical angles near $0$, the dimension of the subspaces being linear in the ambient dimension, or the number of subspaces to be clustered is exponentially large \cite{sc-geometric,wang2015noisy,robust-ssc}.
In addition, the above results also hold robustly under a small amount of arbitrary perturbation or a large amount of stochastic noise \cite{wang2015noisy}.
In particular, it was shown in \cite{wang2015noisy} that the amount of tolerable stochastic noise could even be substantially larger than the signal in both deterministic and semi-random models.

Nevertheless, the above-mentioned results do not rule out cases when the subgraph of each subspace is not well connected. 
For instance, an empty graph trivially obeys SEP. 
As a less trivial example, if we connect points in each subspace in disjoint pairs, then the degree of every node will be non-zero, yet the graph does not reveal much information for clustering. It is not hard to construct a problem such that Lasso-SSC will output exactly this. 
For the original noiseless SSC, the problem becomes trickier since the solution is more constrained.
%and it is not clear whether this additional constraint would resolve the issue. 
In \cite{graph-connectivity} it was shown that when subspace dimension is no larger than 3, SSC outputs block-wise connected similarity graph under very mild conditions;
however, the graph connectivity is easily broken when subspace dimension exceeds 3.
Though a simple post-processing step was remarked in \cite[Footnote 6 in Section 5]{ssc} to alleviate the graph connectivity issue on noiseless data,
it is unclear how to extend their method when data are corrupted by noise.
%This problem has been the Achilles Heel for all previous theory for SSC and exactly the reason why success conditions were proved for SEP only. 
%This paper fits in this gap by showing that for noiseless SSC a simple post-processing step will resolve the graph connectivity issue under only the identifiable subspace condition.
% We also provide a solution to noisy SSC that works under certain eigenvalue assumptions.

Among other subspace clustering methods, \cite{park2014greedy} and \cite{heckel2013robust} are the only two papers that provide provable exact clustering guarantees for problems beyond independent subspaces (for which LRR provably gives dense graphs \cite{wang2013lrssc}). Their results however rely critically on the semi-random model assumption. For instance, \cite{heckel2013robust} uses the connectivity of a random k-nearest neighbor graph on a sphere to facilitate an argument for clustering consistency.
In addition, these approaches do not easily generalize to SSC even under the semi-random model since the solution of SSC is considerably harder to characterize.
In contrast, our results are much simpler and work generically without any probabilistic assumptions.
	
	\begin{table}[t]
		\centering
				\caption{Summary of existing theoretical guarantees.  (*) denotes results from this paper.}\label{tab:reviews}
				\vskip 0.1in
		\scalebox{0.85}{
		\begin{tabular}{|ll|c|c|}
			\hline	Algorithm					&& SEP & Exact clustering \\
			\hline LRR &\cite{liu2010robust} &  A-2-a &  A-2-a\\
			\rowcolor{LightCyan}		SSC &\cite{ssc}  & B-2-a  & - \\
			\rowcolor{LightCyan}
			SSC &\cite{sc-geometric} & C-\{1,2\}-a &  - \\
			\rowcolor{LightCyan}
			Noisy SSC &\cite{wang2015noisy} &  C-\{1,2\}-\{a,b,c\} & - \\
			\rowcolor{LightCyan}
			Robust SSC &\cite{robust-ssc} & C-1-\{a,b\} &  -\\
			\rowcolor{LightCyan}
			LRSSC &\cite{wang2013lrssc}  & C-\{1,2\}-a & A-\{1,2\}-a \\
			Thresh. SC &\cite{heckel2013subspace} &  C-1-a & -\\
			Robust TSC &\cite{heckel2013robust} & C-1-\{a,b\} & C-1-\{a,b\}\\
			Greedy SC &\cite{park2014greedy} & C-1-a & C-1-a\\	
			\rowcolor{LightCyan} \textbf{SSC} &(*) & \textbf{C-\{1,2\}-\{a,b,c\}} & \textbf{C-\{1,2\}-\{a,b,c\}}\\
			\hline
		\end{tabular}
		}
		
\end{table}

Lastly, there is a long line of research on ``projective clustering" in the theoretical computer science literature \cite{streaming-pc,coreset-pc}.
Unlike subspace clustering that posits an approximate union-of-subspace model, projective clustering makes no assumption on the data points and is completely agnostic.
The algorithms \cite{streaming-pc,coreset-pc} are typically based on random projection and core-set type techniques,
which are exponential in number of subspaces and/or subspace dimension.
On the other hand, SSC based algorithms are strongly polynomial time in all model parameters.

% Lastly, there is this big literature in data mining on "subspace clustering" that should really be called "coordinate subset selection and clustering"...

%Lastly, we note that the data mining community use ``subspace clustering'' to refer to a completely different problem of clustering data using only a subset of coordinates \cite{agrawal1998automatic}. That problem should perhaps be called ``coordinate subset selection and clustering'' instead. We apologize for this unfortunate namespace collision. Nonetheless, since a subset of coordinates also form a subspace, the theoretical results developed for the machine learning version of subspace cluster might be applicable to the data mining version to some extent.

\section{CLUSTERING CONSISTENT SSC}

In this section, we present and analyze variants of SSC algorithms that outputs consistent clustering with high probability.
As a warm-up exercise, we first consider the case when data are noiseless
and formally establish success conditions for a simple post-processing procedure remarked in \cite{ssc}.
We then move on to our main result in Sec.~\ref{sec:noisy-ssc},
a robustified version of clustering consistent SSC that enjoys perfect clustering condition on data perturbed by a small amount of adversarial noise.
Finally, we construct a counter-example, which shows that our success condition cannot be significantly improved under the adversarial noise model.

\subsection{The noiseless case}\label{sec:noiseless}
%
%As a warm-up exercise, in this section we consider subspace clustering of noiseless data (i.e., all data points lie exactly on a low-dimensional linear subspace)
%and show that a simple post-processing procedure briefly remarked in \cite{ssc} leads to consistent clustering under fairly general conditions.
%For the sake of easy presentation, all technical proofs are deferred to Appendix \ref{appsec:proof_noiseless}.
%The proofs should be mostly straightforward due to the much simplifying assumption that data points not being corrupted by noise.

We first review the procedure of vanilla noiseless Sparse Subspace Clustering (SSC, \cite{ssc,sc-geometric}).
The first step is to solve the following $\ell_1$ optimization problem for each data point $\vct x_i$ in the input matrix $\mat X$:
\begin{eqnarray}
\min_{\vct c_i\in\mathbb R^N}\|\vct c_i\|_1,\label{eq_exact_ssc}\quad
s.t.\;\;\vct x_i=\mat X\vct c_i, \vct c_{ii} = 0.
\end{eqnarray}
Afterwards, a similarity graph $\mat C\in\mathbb R^{N\times N}$ is constructed as $\mat C_{ij}=|[\vct c_i^*]_j|+|[\vct c_j^*]_i|$,
where $\{\vct c_i^*\}_{i=1}^N$ are optimal solutions to Eq. (\ref{eq_exact_ssc}).
Finally, spectral clustering algorithms (e.g.,~\cite{ng2002spectral}) are applied on the similarity graph $\mat C$
to cluster the $N$ data points into $L$ clusters as desired.
Much work has shown that the similarity graph $\mat C$ satisfies SEP under various data and noise regimes \cite{ssc,sc-geometric,wang2015noisy,robust-ssc}.
However, 
as we remarked earlier, SEP alone does not guarantee perfect clustering because the obtained similarity graph $\mat C$ could be poorly connected \cite{graph-connectivity}.
In fact, little is known provably in terms of the final clustering result albeit the practical success of SSC.

We now analyze a simple post-processing procedure of the SSC algorithm (pseudocode displayed in Algorithm \ref{alg_exact_ssc}), which was briefly remarked in \cite{ssc}.
We formally establish that with the additional post-processing step the algorithm achieves consistent clustering under mild ``general-position" conditions.
This simple observation completes previous theoretical analysis of SSC by bridging the gap between SEP and clustering consistency.

\begin{algorithm}[t]
\caption{Clustering consistent noiseless SSC}
\begin{algorithmic}[1]
\State \textbf{Input}: the noiseless data matrix $\mat X$.
\State \textbf{Initialization}: Normalize each column of $\mat X$ so that it has unit two norm.
\State \textbf{Sparse subspace clustering}: Solve the optimization problem in Eq. (\ref{eq_exact_ssc}) for each data point and obtain the similarity matrix $\mat C\in\mathbb R^{N\times N}$.
Define an undirected graph $G=(V,E)$ with $N$ nodes and $(i,j)\in E$ if and only if $\mat C_{ij}>0$.
\State \textbf{Subspace recovery}: For each connected component $G_r=(V_r,E_r)\subseteq G$, compute $\hat{\mathcal S}_{(r)}=\mathrm{Range}(\mat X_{V_r})$ using any convenient linear algebraic method.
%compute a low-rank embedding $\hat{\mathcal S}_{(r)}$ using PCA
%(discarding all zero eigenvalues)
%on the $V_r$ subset of data points in $\mat X$.
Let $\{\hat{\mathcal S}^{(\ell)}\}_{\ell=1}^L$ be the $L$ unique subspaces in $\{\hat{\mathcal S}_{(r)}\}_r$.
\State \textbf{Final clustering}: for each connected component $V_r$ with $\hat{\mathcal S}_{(r)}=\hat{\mathcal S}^{(\ell)}$, set $\hat z_i=\ell$ for all points in $V_r$.
\State \textbf{Output}: cluster assignments $\{\hat z_i\}_{i=1}^N$ and recovered subspaces $\{\hat{\mathcal S}^{(\ell)}\}_{\ell=1}^L$.
\end{algorithmic}
\label{alg_exact_ssc}
\end{algorithm}

The general position condition is formally defined in Definition \ref{defn_gp}, which concerns the distribution of data points within a single subspace.
Intuitively, it requires that no subspace contains data points that are in ``degenerate'' positions.
Similar assumptions were made for the analysis of some algebraic subspace clustering algorithms such as GPCA \cite{gpca}.
The generally positioned data assumption is very mild and is almost always satisfied in practice.
For example, it is satisfied almost surely if data points are i.i.d. generated from any continuous underlying distribution.

%We remark that Algorithm \ref{alg_exact_ssc} only works when the input data are not corrupted by noise.
%An extension of the proposed algorithm to noisy inputs is presented in Section \ref{sec:noisy-ssc}.
%Before presenting the main theorem analyzing Algorithm \ref{alg_exact_ssc},
%we introduce the definition of \emph{general position}, which concerns the distribution of data points within a single subspace.

\begin{defn}[General position]
Fix $\ell\in\{1,\cdots,L\}$. We say $\mat X^{(\ell)}$ is in \emph{general position} if for all $k\leq d_\ell$, any subset of $k$ data points (columns) in $\mat X^{(\ell)}$ are linearly independent.
We say $\mat X$ is in \emph{general position} if $\mat X^{(\ell)}$ is in general position for all $\ell=1,\cdots,L$.
\label{defn_gp}
\end{defn}

With the self-expressiveness property and the additional assumption that the data matrix $\mat X$ is in general position,
Theorem \ref{thm_main} proves that both the clustering assignments $\{\hat z_i\}_{i=1}^N$
and the recovered subspaces $\{\hat{\mathcal S}^{(\ell)}\}_{\ell=1}^L$ produced by Algorithm \ref{alg_exact_ssc}
are consistent with the ground truth up to permutations.

\begin{thm}[SSC clustering success condition]
Assume $\mat X$ is in general position and no two underlying subspaces are identical.
Let $\{\hat z_i\}_{i=1}^N$ and $\{\hat{\mathcal S}^{(\ell)}\})_{\ell=1}^L$ be the output of Algorithm \ref{alg_exact_ssc}.
If the similarity graph $\mat C$ satisfies the self-expressiveness property as defined in Sec.~\ref{subsec:notation},
then there exists a permutation $\pi$ on $[L]$ such that
$\pi(\hat z_i)=z_i$ and $\hat{\mathcal S}^{(\ell)}=\mathcal S^{(\pi(\ell))}$
for all $i=1,\cdots,N$ and $\ell=1,\cdots,L$.
\label{thm_main}
\end{thm}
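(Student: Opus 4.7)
The plan is to chase the consequences of the self-expressiveness property together with the general position assumption through Algorithm~\ref{alg_exact_ssc} in three steps: (i) show every connected component of $G$ sits inside a single true cluster; (ii) show the recovered range of each component is the \emph{full} true subspace, not some proper sub-subspace; (iii) use distinctness of the $\mathcal S^{(\ell)}$ to match components to clusters via a permutation.

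Step (i) is immediate from SEP. If $i\sim j$ in $G$ then $\mat C_{ij}>0$, so $z_i=z_j$; taking transitive closure, every vertex of a connected component $G_r=(V_r,E_r)$ carries a common label, which I will call $\ell_r$. Consequently $\mat X_{V_r}\subseteq\mat X^{(\ell_r)}$ and so $\hat{\mathcal S}_{(r)}=\range(\mat X_{V_r})\subseteq \mathcal S^{(\ell_r)}$.

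Step (ii) is the heart of the argument and the step that actually uses general position. I would fix any $i\in V_r$ and examine its optimal coefficient vector $\vct c_i^*$ from (\ref{eq_exact_ssc}). The constraint $\vct x_i=\mat X\vct c_i^*$ means $\vct x_i$ lies in the span of the columns of $\mat X$ indexed by the support of $\vct c_i^*$; by SEP that support is contained in $\mat X^{(\ell_r)}$. I claim the support must have size at least $d_{\ell_r}$: if instead $\vct c_i^*$ had $k<d_{\ell_r}$ nonzero entries, then appending $\vct x_i$ would give $k+1\le d_{\ell_r}$ points of $\mat X^{(\ell_r)}$ that are linearly \emph{dependent}, contradicting Definition~\ref{defn_gp}. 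So $i$ has at least $d_{\ell_r}$ neighbors in $G$, all lying in $\mat X^{(\ell_r)}$. By general position again, these $d_{\ell_r}$ neighbors are linearly independent and therefore span the $d_{\ell_r}$-dimensional subspace $\mathcal S^{(\ell_r)}$. Since all of them lie in $V_r$, we conclude $\hat{\mathcal S}_{(r)}=\range(\mat X_{V_r})\supseteq \mathcal S^{(\ell_r)}$, which combined with step (i) gives $\hat{\mathcal S}_{(r)}=\mathcal S^{(\ell_r)}$ exactly.

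Step (iii) is then bookkeeping. Every connected component satisfies $\hat{\mathcal S}_{(r)}=\mathcal S^{(\ell_r)}$, so the multiset $\{\hat{\mathcal S}_{(r)}\}_r$ is contained in $\{\mathcal S^{(\ell)}\}_{\ell=1}^L$; conversely each true cluster contains at least one point and therefore contributes at least one component, so every $\mathcal S^{(\ell)}$ appears. Because no two $\mathcal S^{(\ell)}$ coincide, collapsing duplicates yields exactly $L$ distinct subspaces $\{\hat{\mathcal S}^{(\ell)}\}_{\ell=1}^L$, and the bijection $\pi:[L]\to[L]$ defined by $\hat{\mathcal S}^{(\ell)}=\mathcal S^{(\pi(\ell))}$ is a well-defined permutation. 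Finally, for any $i$ in a component $V_r$ with $\hat{\mathcal S}_{(r)}=\hat{\mathcal S}^{(\ell)}$, Algorithm~\ref{alg_exact_ssc} outputs $\hat z_i=\ell$ while step (ii) gives $z_i=\ell_r$ with $\mathcal S^{(\ell_r)}=\hat{\mathcal S}^{(\ell)}=\mathcal S^{(\pi(\ell))}$, so $z_i=\pi(\ell)=\pi(\hat z_i)$, as required.

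The only step I expect to require any real care is step (ii); everything else is a direct unpacking of definitions. The subtlety there is to notice that general position is used \emph{twice}—once to lower-bound the support size of $\vct c_i^*$ (no shorter representation can exist) and once to upgrade ``$d_{\ell_r}$ linearly independent vectors in a $d_{\ell_r}$-dimensional space'' into ``they span the whole space.'' If either use failed, a component could pick up only a proper sub-subspace and the final matching in step (iii) would break down.
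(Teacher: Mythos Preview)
Your proposal is correct and follows essentially the same route as the paper's own proof: SEP confines each component to a single true cluster, general position forces the support of each $\vct c_i^*$ to have size at least $d_{\ell_r}$ so that the component spans the full $\mathcal S^{(\ell_r)}$, and distinctness of the subspaces then yields the permutation. The only cosmetic difference is that the paper phrases the final clustering-consistency step as a proof by contradiction, whereas you give the direct construction; the content is identical.
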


The correctness of Theorem \ref{thm_main} is quite straightforward and hence we defer its complete proof to Appendix \ref{appsec:proof_noiseless}.
We also make some comments on the general identifiability and the potential application of $\ell_0$ optimization on union-of-subspace structured data.
As these remarks are only loosely connected to our main results, we state them in Appendix \ref{appsec:identifiability}.
Finally we remark that Algorithm \ref{alg_exact_ssc} only works when the input data are not corrupted by noise.
A non-trivial robust extension is provided in the next section.

%\vspace{-0.1cm}
\subsection{The noisy case}\label{sec:noisy-ssc}
%\vspace{-0.05cm}

\begin{algorithm}[t]
\caption{Clustering consistent noisy SSC}
\begin{algorithmic}[1]
\State \textbf{Input}: noisy input matrix $\mat Y$, number of subspaces $L$, intrinsic dimension $d$ and tuning parameter $\lambda$.
\State \textbf{Initialization}: Normalize each column of $\mat X$ so that it has unit two norm.
\State \textbf{Noisy SSC}: Solve the optimization problem in Eq. (\ref{eq_noisy_ssc}) with parameter $\lambda$ for each data point and obtain the similarity matrix $\mat C\in\mathbb R^{N\times N}$.
Define an undirected graph $G=(V,E)$ with $N$ nodes and $(i,j)\in E$ if and only if $\mat C_{ij}>0$.
\State \textbf{Subspace recovery}: For each connected component $G_r=(V_r,E_r)\subseteq G$ with $|V_r|\geq d$,
randomly pick $V_{r,d}\subseteq V_r$ containing exactly $d$ points in $V_r$ and compute $\hat{\mathcal S}_{(r)} = \range(\mat X_{V_{r,d}})$.
\State \textbf{Subspace merging}: Compute the angular distance $d(\hat{\mathcal S}_{(r)}, \hat{\mathcal S}_{(r')})$ as in Eq. (\ref{eq_angular_dist}) for each pair $(r,r')$.
Merge subspaces via single linkage clustering with respect to $d(\cdot,\cdot)$, until there are exactly $L$ subspaces.
\State \textbf{Output}: cluster assignment $\{\hat z_i\}_{i=1}^N$, with $\hat z_i=\hat z_j$ if and only if data points $i$ and $j$ are in the same merged subspace.
\end{algorithmic}
\label{alg_noisy_ssc}
\end{algorithm}

In this section we adopt a noisy input model $\mat Y=\mat X+\mat E$ where $\mat X$ is the noiseless design matrix and $\mat Y$ is the noisy input that is observed.
The noise matrix $\mat E=(\vct\varepsilon_1,\cdots,\vct\varepsilon_N)$ is assumed to be deterministic with $\|\vct\varepsilon_i\|_2\leq\xi$ for every $i=1,\cdots, N$
and some noise magnitude parameter $\xi > 0$.
For noisy inputs $\mat Y$ a Lasso formulation as in Eq. (\ref{eq_noisy_ssc}) is employed for every data point $\vct y_i$.
Choices of the tuning parameter $\lambda$ and
SEP success conditions for Eq. (\ref{eq_noisy_ssc}) have been comprehensively characterized in \cite{wang2015noisy} and \cite{robust-ssc}.
\begin{eqnarray}
\min_{\vct c_i\in\mathbb R^N}& &\frac{1}{2}\|\vct y_i-\mat Y\vct c_i\|_2^2 + \lambda\|\vct c_i\|_1,\label{eq_noisy_ssc}\\
s.t.&& \vct c_{ii} = 0\nonumber.
\end{eqnarray}

We first propose a variant of noisy subspace clustering algorithm (pseudocode listed in Algorithm \ref{alg_noisy_ssc})
that resembles Algorithm \ref{alg_exact_ssc} for the noiseless setting.
For simplicity we assume all underlying subspaces share the same intrinsic dimension $d$ which is known a priori.
The key difference between Algorithm \ref{alg_exact_ssc} and \ref{alg_noisy_ssc}
is that we can no longer unambiguously identify $L$ unique subspaces due to the data noise.
Instead, we employ a single linkage clustering procedure that merges the estimated subspaces that are close with respect to the ``angular distance" measure
between two subspaces, which is defined as
\begin{equation}
d(\mathcal S,\mathcal S') := \|\sin \Phi(\mathcal S,\mathcal S')\|_F^2 = \sum_{i=1}^d{\sin^2\phi_i(\mathcal S,\mathcal S')},
\label{eq_angular_dist}
\end{equation}
where $\{\phi_i(\mathcal S,\mathcal S')\}_{i=1}^d$ are canonical angles between two $d$-dimensional subspace $\mathcal S$ and
$\mathcal S'$.
The angular distance is closely related to the concept of \emph{subspace affinity} defined in \cite{sc-geometric,wang2015noisy}.
In fact, one can show that $d(\mathcal S,\mathcal S') = d-\aff(\mathcal S,\mathcal S')^2$ when both $\mathcal S$ and $\mathcal S'$
are $d$-dimensional subspaces.

In the remainder of this section we present a theorem that proves clustering consistency of Algorithm \ref{alg_noisy_ssc}.
Our key assumption is a restricted eigenvalue assumption, which imposes a lower bound on the smallest singular value
of any subset of $d$ data points within an underlying subspace.

\begin{asmp}[Restricted eigenvalue assumption]
Assume there exist constants $\{\sigma_\ell\}_{\ell=1}^L$ such that
for every $\ell=1,\cdots,L$ the following holds:
\begin{equation}
\min_{\mat X_d=(\vct x_1,\cdots,\vct x_d)\subseteq\mat X^{(\ell)}} \sigma_d(\mat X_d) \geq \sigma_\ell > 0,
\label{eq_re}
\end{equation}
where $\mat X_d$ is taken over all subsets of $d$ data points in the $\ell$th subspace and $\sigma_d(\cdot)$
denotes the $d$th singular value of an $n\times d$ matrix.
\label{asmp_re}
\end{asmp}
%Note that Assumption \ref{asmp_re} implies $\mat X$ is in general position. 
Note that Assumption \ref{asmp_re} can be thought of as a robustified version of the ``general position'' assumption in the noiseless case. It requires $\mat X$ to be not only in general position, but also in general position with a spectral margin that is at least $\sigma_\ell$.
%Similar assumptions were made in the analysis of general Lasso regression and the Dantzig selector \cite{lasso-dantzig}.
In \cite{ssc} a slightly weaker version of the presented assumption was adopted for the analysis of sparse subspace clustering.
We remark further on the related work of restricted eigenvalue assumption at the end of this section.

We continue to introduce the concept of \emph{inradius},
which characterizes the distribution of data points within each subspace and is previously proposed to analyze the SEP success conditions of sparse subspace clustering
\cite{sc-geometric,wang2015noisy}.

\begin{defn}[Inradius, \cite{sc-geometric,wang2015noisy}]
Fix $\ell\in\{1,\cdots,L\}$.
Let $r(\mathcal Q)$ denote the radius of the largest ball inscribed in a convex body $\mathcal Q$.
The inradius $\rho_\ell$ is defined as
\begin{multline}
\rho_\ell = \min_{1\leq i\leq N_\ell}\rho_\ell^{-i} = \min_{1\leq i\leq N_\ell}r(\conv(\pm\vct x_1^{(\ell)}, \cdots, \pm\vct x_{i-1}^{(\ell)},\\
 \pm\vct x_{i+1}^{(\ell)}, \pm\vct x_{N_\ell}^{(\ell)})),
\end{multline}
where $\conv(\cdot)$ denotes the convex hull of a given point set.
\end{defn}

Note that the inradius $\rho_\ell$ is strictly between 0 and 1.
The larger $\rho_\ell$ is, the more uniform data points are distributed in the $\ell$th cluster.
With the restricted eigenvalue assumption and definition of inradius,
we are now ready to present the main theorem of this section
which shows that Algorithm \ref{alg_noisy_ssc} returns consistent clustering when some
conditions on the design matrix, the noise level and range of parameters are met.

\begin{thm}
Assume Assumption \ref{asmp_re} holds and furthermore,
for all $\ell,\ell'\in\{1,\cdots,L\}$, $\ell\neq\ell'$, the following holds:
\begin{eqnarray}
d(\mathcal S^{(\ell)}, \mathcal S^{(\ell')}) &>& \frac{8d\xi^2}{\min_{1\leq t\leq L}\sigma_t^2};\label{eq_subspace_sep}\\
\xi &<& \min\left\{1,\frac{\rho_\ell^2\sigma_\ell}{16(1+\rho_\ell)}\right\}.\label{eq_xi_ub}
\end{eqnarray}
Assume also that the self-expressiveness property holds for the similarity matrix $\mat C$ constructed by Algorithm \ref{alg_noisy_ssc}.
If algorithms parameter $\lambda$ satisfies
\begin{equation}
2\xi(1+\xi)^2(1+1/\rho_\ell)
< \lambda
< \frac{\rho_\ell\sigma_\ell}{2}
\label{eq_thm_lambda_range}
\end{equation}
for every $\ell\in\{1,\cdots,L\}$, then the clustering $\{\hat z_i\}_{i=1}^N$ output by Algorithm \ref{alg_noisy_ssc} is consistent with the ground-truth clustering $\{z_i\}_{i=1}^N$;
that is, there exists a permutation $\pi$ on $\{1,\cdots,L\}$ such that $\pi(\hat z_i) = z_i$ for every $i=1,\cdots, N$.
\label{thm_noisy_ssc}
\end{thm}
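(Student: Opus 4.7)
My plan is to decompose Theorem~\ref{thm_noisy_ssc} into three stages: (i) reduce clustering consistency to a graph-size claim; (ii) show that every connected component of the Lasso graph has size at least $d$; and (iii) verify that the single-linkage merging step correctly agglomerates the per-component subspaces into the $L$ true clusters. Stage (i) is immediate from the hypothesized self-expressiveness property: every edge of $G$ joins two points in the same ground-truth cluster, so each connected component $V_r$ is contained in a unique cluster $\ell(r)\in\{1,\dots,L\}$; as long as every $V_r$ has $|V_r|\geq d$ the subspace-recovery step of Algorithm~\ref{alg_noisy_ssc} will produce a candidate $\hat{\mathcal S}_{(r)}$ for each component.

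Stage (ii) is the main obstacle. It suffices to prove that for every $i$, the Lasso solution $\vct c_i^*$ of Eq.~(\ref{eq_noisy_ssc}) has $|\operatorname{supp}(\vct c_i^*)|\geq d$. I would argue by contradiction: suppose $S:=\operatorname{supp}(\vct c_i^*)$ has size $k<d$, write $z_i=\ell$, and let $P$ denote the orthogonal projector onto $\operatorname{span}(\mat X_S)$. Two geometric facts are available. First, Assumption~\ref{asmp_re} applied to $[\mat X_S,\vct x_i]$ (whose smallest singular value is $\geq\sigma_\ell$, since removing columns cannot decrease the smallest singular value of a $d$-column sub-matrix) gives $\|(I-P)\vct x_i\|_2\geq\sigma_\ell$. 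Second, the inradius $\rho_\ell$ certifies that every unit direction $\vct u\in\mathcal S^{(\ell)}$ satisfies $\max_{j:z_j=\ell,\,j\neq i}\vct x_j^\top \vct u\geq\rho_\ell$; applying this to $\vct u=(I-P)\vct x_i/\|(I-P)\vct x_i\|_2$ yields some index $j^\star$ with $\vct x_{j^\star}^\top (I-P)\vct x_i\geq\rho_\ell\sigma_\ell$. Crucially, $j^\star\notin S$ automatically, because $\vct x_j\in\operatorname{range}(P)$ for every $j\in S$ forces $\vct x_j^\top \vct u=0$. Now expand
\[
\vct y_{j^\star}^\top \vct r_i \;=\; \vct x_{j^\star}^\top (I-P)\vct x_i \;+\; \vct x_{j^\star}^\top \vct \delta \;+\; \text{(cross-noise terms)},
\]
where $\vct \delta := P\vct x_i - \mat X_S\vct c_S^*$ is the Lasso shrinkage residual inside $\operatorname{span}(\mat X_S)$. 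The shrinkage term $\vct \delta$ is controlled via the KKT identity $\mat Y_S^\top \vct r_i=\lambda\operatorname{sign}(\vct c_S^*)$ together with Assumption~\ref{asmp_re} applied to $\mat X_S^\top \mat X_S$; the cross-noise terms are bounded by $O(\xi(1+\|\vct c_i^*\|_1+\|\vct r_i\|_2))$; and a-priori bounds $\|\vct c_i^*\|_1\leq 1/\rho_\ell + O(\xi)$ and $\|\vct r_i\|_2^2\leq O(\xi^2/\rho_\ell^2+\lambda/\rho_\ell)$ follow by comparing the Lasso objective at $\vct c_i^*$ against the feasible inradius witness $\tilde{\vct c}_i$ satisfying $\mat X\tilde{\vct c}_i=\vct x_i$ and $\|\tilde{\vct c}_i\|_1\leq 1/\rho_\ell$. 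The hypotheses (\ref{eq_xi_ub}) and (\ref{eq_thm_lambda_range}) are precisely calibrated so that all error contributions remain strictly below $\rho_\ell\sigma_\ell - \lambda$, giving $|\vct y_{j^\star}^\top \vct r_i|>\lambda$ and contradicting the Lasso KKT bound $|\vct y_{j^\star}^\top \vct r_i|\leq\lambda$ at the off-support index $j^\star$. Hence $|\operatorname{supp}(\vct c_i^*)|\geq d$ and consequently $|V_r|\geq d$.

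Stage (iii) is essentially mechanical once $|V_r|\geq d$ is in hand. Writing $\mat Y_{V_{r,d}}=\mat X_{V_{r,d}}+\mat E_{V_{r,d}}$ with $\|\mat E_{V_{r,d}}\|_F\leq\sqrt d\,\xi$ and $\sigma_d(\mat X_{V_{r,d}})\geq\sigma_{\ell(r)}$, Wedin's $\sin\Phi$ theorem yields $\sqrt{d(\hat{\mathcal S}_{(r)},\mathcal S^{(\ell(r))})}=O(\sqrt d\,\xi/\sigma_{\ell(r)})$. Since $\sqrt{d(\cdot,\cdot)}=\|\sin\Phi(\cdot,\cdot)\|_F$ is a genuine metric on the Grassmannian of $d$-planes, the triangle inequality bounds the within-cluster distance $\sqrt{d(\hat{\mathcal S}_{(r)},\hat{\mathcal S}_{(r')})}$ from above by $O(\sqrt d\,\xi/\min_t\sigma_t)$, while the reverse triangle inequality combined with the separation hypothesis (\ref{eq_subspace_sep}) bounds the between-cluster $\sqrt{d(\hat{\mathcal S}_{(r)},\hat{\mathcal S}_{(r')})}$ from below by a strictly larger quantity. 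Single linkage, terminated at exactly $L$ clusters, therefore merges precisely the components with identical $\ell(r)$, producing a clustering that agrees with $\{z_i\}_{i=1}^N$ up to a permutation $\pi$ on $\{1,\dots,L\}$.

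As advertised in the introduction, Stage (ii) is the only real hurdle: all prior Lasso-based SSC analyses only upper-bound spurious cross-cluster discoveries, whereas here we additionally need a quantitative \emph{lower} bound on the number of within-cluster discoveries. The interplay between the restricted eigenvalue (producing the gap $(I-P)\vct x_i$) and the inradius (exhibiting the off-support witness $\vct x_{j^\star}$ aligned with that gap) is the new ingredient; the tight ranges on $\lambda$ and $\xi$ in (\ref{eq_xi_ub})--(\ref{eq_thm_lambda_range}) are engineered so that the shrinkage residual $\vct \delta$ and the various noise terms do not swallow the geometric signal $\rho_\ell\sigma_\ell$.
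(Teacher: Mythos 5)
Your overall architecture coincides with the paper's: SEP reduces the problem to per-component analysis; the heart of the matter is a lower bound of $d$ on the Lasso support size (the paper's Lemma \ref{lem_support_lb}), proved by contradiction from the optimality conditions at an index outside the support; and the merging step is handled by Wedin's theorem plus the separation condition (\ref{eq_subspace_sep}) (the paper's Lemma \ref{lem_angdist_ub}). Your Stage (iii) is fine, and your appeal to the metric property of $\|\sin\Phi\|_F$ is actually more careful than the paper's closing paragraph. The a priori bound $\|\vct c_i^*\|_1\leq 1/\rho_\ell+O(\xi)$ and the $O(\xi(1+\|\vct c_i^*\|_1))$ accounting of the noise terms are exactly the quantities the paper uses.

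However, Stage (ii) as you set it up has a genuine gap. You choose the inradius witness $j^\star$ aligned with $(\mat I-P)\vct x_i$ and are then forced to bound the shrinkage term $\vct x_{j^\star}^\top\vct\delta$ with $\vct\delta=P\vct x_i-\mat X_S\vct c_S^*$. This term is \emph{not} controllable under the theorem's hypotheses: the KKT identity gives $\vct c_S^*=(\mat Y_S^\top\mat Y_S)^{-1}\bigl(\mat Y_S^\top\vct y_i-\lambda\,\sgn(\vct c_S^*)\bigr)$, so the coefficient shrinkage alone has magnitude of order $\lambda\sqrt{d}/\sigma_\ell^2$, and $|\vct x_{j^\star}^\top\vct\delta|$ can be of order $\lambda d/\sigma_\ell^2$. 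Since the theorem only requires $\lambda<\rho_\ell\sigma_\ell/2$, this contribution can exceed the signal $\rho_\ell\sigma_\ell$ whenever $\sigma_\ell$ is small (which is exactly the regime the theorem is about), so the desired contradiction $|\vct y_{j^\star}^\top\vct r_i|>\lambda$ does not follow; making your route work would require something like $\lambda\lesssim\rho_\ell\sigma_\ell^3/d$, a strictly stronger hypothesis than (\ref{eq_thm_lambda_range}). The paper avoids this entirely by applying the inradius bound to the direction of the \emph{full} noiseless residual $\vct x^\perp=\vct x_i-\mat X_S\vct c_S^*=(\mat I-P)\vct x_i+\vct\delta$, obtaining a witness $i'$ with $|\langle\vct x^\perp,\vct x_{i'}\rangle|\geq\rho_\ell\|\vct x^\perp\|_2$, and then exploiting $\vct\delta\perp(\mat I-P)\vct x_i$ so that $\|\vct x^\perp\|_2\geq\|(\mat I-P)\vct x_i\|_2\geq\sigma_\ell$: the shrinkage can only increase the residual norm and never needs to be bounded. (The witness so obtained may land inside the support $S$, but that is harmless, since the stationarity conditions give $|\langle\vct y^\perp,\vct y_{i'}\rangle|\leq\lambda$ for in-support indices as well.) With that single change of witness direction, your Stage (ii) becomes the paper's proof of Lemma \ref{lem_support_lb}.
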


A complete proof of Theorem \ref{thm_noisy_ssc} is given in Section \ref{appsec:proof_noisy}.
Below we make several remarks to highlight the nature and consequences of the theorem.

\paragraph{Remark 1}
Let $(\lambda_{\min},\lambda_{\max})$ be the feasible range of $\lambda$
as shown in Eq. (\ref{eq_thm_lambda_range}) in Theorem \ref{thm_noisy_ssc}.
It can be shown that $\lim_{\xi\to 0}\lambda_{\min}= 0$
and $\lim_{\xi\to 0}\lambda_{\max} = \min_\ell\rho_\ell\sigma_\ell/2 > 0$ as long as $\sigma_\ell > 0$ for all $\ell\in\{1,\cdots, L\}$;
that is, $\mat X$ is in general position.
%In addition, $\lim_{\xi\to 0}\mu_{\max} > 0$ if $\mat X$ is in general position and no two underlying subspaces are identical.
Therefore, the success condition in Theorem \ref{thm_noisy_ssc} reduces to the one in Theorem \ref{thm_main} on noiseless data
when noise diminishes.

\paragraph{Remark 2}
In \cite{wang2015noisy} another range $(\lambda_{\min}',\lambda_{\max}')$ on $\lambda$ is given
for success conditions of the self-expressiveness property.
One can show that $\lim_{\xi\to 0}\lambda_{\min}' = 0$
and $\lim_{\xi\to 0}\lambda_{\max}' = \min_\ell\rho_\ell > 0$.
Therefore, the feasible range of $\lambda$ for both SEP and Theorem \ref{thm_noisy_ssc} to hold is nonempty,
at least for sufficiently low noise level $\xi$.
In addition, the limiting values of $\lambda_{\max}$ and $\lambda_{\max}'$ differ by a factor of $\sigma_\ell/2$
and the maximum tolerable signal-to-noise ratio on $\xi$ differs too by a similar factor of $O(\sigma_\ell)$,
which suggests the difficulty of consistent clustering as opposed to merely SEP for noisy sparse subspace clustering.
In fact, in Sec.~\ref{sec:discussion_asmpre} we construct a counter-example showing that this dependency on $\sigma_\ell$
cannot be improved under the adversarial noise model.

\paragraph{Remark 3}
Some components of Algorithm \ref{alg_noisy_ssc} can be revised to make the method more robust in practical applications.
For example, instead of randomly picking $d$ points and computing their range,
one could apply robust PCA on all points in the connected component, which is more robust to potential outliers.
In addition, the single linkage clustering step could be replaced by $k$-means clustering,
which is more robust to false connections in practice.

\paragraph{Remark 4}
There has been extensive study of using restricted eigenvalue assumptions in the analysis of Lasso-type problems
\cite{lasso-dantzig,oracle-glasso,fussed-lasso-re,rep-cgd}.
However, in our problem the assumption is used in a very different manner.
In particular, we used the restricted eigenvalue assumption to prove one key lemma (Lemma \ref{lem_support_lb})
that \emph{lower bounds} the support size of the optimal solution to a Lasso problem.
Such results might be of independent interest as a nice contribution to the analysis of Lasso in general.

\subsection{Discussion on Assumption \ref{asmp_re}}\label{sec:discussion_asmpre}

Assumption~\ref{asmp_re} requires a spectral gap for every subset of data points in each subspace. 
This seems a very strong assumption that restricts the maximum tolerable noise magnitude to be very small. 
In this section, we show that this dependency on $\sigma_\ell$ is actually necessary for noisy SSC in the adversarial noise setting, which suggests that our bound in Theorem~\ref{thm_noisy_ssc} is sharp.
\begin{prop}\label{prop_lowerbound}
%	If the adversarial noise level $\xi \geq \sigma_\ell$ for some $\ell$, then 
	There is a subspace clustering problem $\mat X\in\mathbb R^{n\times N}$
	and a noise configuration $\mat E\in\mathbb R^{n\times N}$ obeying adversarial noise level $\xi:=\|\mat E\|_{2,\infty} \leq \frac{\sigma_\ell}{\sqrt{d}}$ for some subspace $\ell$
	and intrinsic dimension $d$,
	such that noiseless SSC is clustering consistent on $\mat X$, but noisy SSC on $\mat Y=\mat X+\mat E$ cannot perform better than random guessing.
\end{prop}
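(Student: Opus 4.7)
The plan is to exhibit explicit $\mat X$ and $\mat E$ witnessing the proposition, exploiting the fact that an adversarial rank-one perturbation can zero out the smallest singular value of $\mat X^{(\ell)}$ while spending only $\sigma_\ell/\sqrt{d}$ per column. We take two $d$-dimensional subspaces $\mathcal S^{(1)}, \mathcal S^{(2)} \subset \mathbb R^n$ whose intersection $\mathcal S^{(0)} := \mathcal S^{(1)} \cap \mathcal S^{(2)}$ has dimension $d-1$, and write $\mathcal S^{(\ell)} = \mathcal S^{(0)} \oplus \mathrm{span}(\vct u_d^{(\ell)})$ with $\vct u_d^{(1)} \perp \vct u_d^{(2)}$ so that the two ``normal'' directions are orthogonal and therefore distinguishable in the noiseless problem.

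For each $\ell \in \{1,2\}$ the data block consists of $d$ unit-norm ``thin-slab'' points $\mat X_{\mathrm{slab}}^{(\ell)}$ in $\mathcal S^{(\ell)}$ plus one ``anchor'' point $\vct q^{(\ell)}$ very close to $\vct u_d^{(\ell)}$. The slab is prescribed via its SVD $\mat X_{\mathrm{slab}}^{(\ell)} = \mat U^{(\ell)} \mat \Sigma (\mat V^{(\ell)})^\top$, where $\mat U^{(\ell)} = [\mat U^{(0)}\ \vct u_d^{(\ell)}]$ with $\mat U^{(0)}$ an orthonormal basis of $\mathcal S^{(0)}$, $\mat \Sigma = \mathrm{diag}(\alpha,\dots,\alpha,\sigma)$ with $\alpha$ chosen so that each column is unit norm, and $\mat V^{(\ell)}$ an orthogonal matrix whose last column $\vct v_d^{(\ell)}$ is ``flat,'' $|\vct v_d^{(\ell)}(i)| = 1/\sqrt{d}$ (a normalized Hadamard column). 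Then $\sigma_d(\mat X_{\mathrm{slab}}^{(\ell)}) = \sigma$, and this is the minimizing $d$-subset of $\mat X^{(\ell)}$, so Assumption~\ref{asmp_re} holds with $\sigma_\ell = \sigma$. The adversarial noise is the rank-one perturbation $\mat E^{(\ell)} := -\sigma\, \vct u_d^{(\ell)}(\vct v_d^{(\ell)})^\top$ on the slab columns and zero on $\vct q^{(\ell)}$; each perturbed slab column has $\ell_2$-noise exactly $\sigma/\sqrt{d}$, meeting the budget $\xi = \sigma_\ell/\sqrt{d}$.

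After the perturbation every slab column of $\mat Y^{(\ell)}$ lies in $\mathrm{Range}(\mat U^{(0)}) = \mathcal S^{(0)}$, which is contained in both $\mathcal S^{(1)}$ and $\mathcal S^{(2)}$. The observation $\mat Y$ therefore carries no information distinguishing the true labeling of the slab columns from any reassignment of these $2d$ columns to either cluster: each one can be declared to come from either subspace with zero noise cost. By the $\mathcal S^{(1)} \leftrightarrow \mathcal S^{(2)}$ symmetry of the construction, any deterministic label that noisy SSC outputs for a slab column is correct with probability $1/L$ in expectation over admissible ground truths; the anchors add at most $O(1/N)$ to the expected accuracy, so the overall accuracy does not exceed $1/L + O(1/N)$, i.e., noisy SSC performs no better than random guessing.

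The main technical obstacle is establishing the first half of the claim, that noiseless SSC on $\mat X$ is clustering consistent via Theorem~\ref{thm_main}. General position is automatic since $\sigma_d > 0$; the delicate point is SEP. Because the slabs sit near the shared $\mathcal S^{(0)}$, a slab point $\vct x_i \in \mat X^{(\ell)}$ could a priori be represented more cheaply in $\ell_1$ by a mixture of $\mat X^{(3-\ell)}$-slab points---with coefficients summing to zero so the $\vct u_d^{(3-\ell)}$ direction cancels---plus a small contribution from the $\mat X^{(\ell)}$-anchor (to supply the $\vct u_d^{(\ell)}$ component). The orthogonality $\vct u_d^{(1)} \perp \vct u_d^{(2)}$ is what forces this cancellation constraint, and the plan is to arrange the in-plane projections $\{p_j^{(\ell)}\}$ of the slab points in each subspace in ``opposite'' configurations within $\mathcal S^{(0)}$ (for instance as the vertices of a regular simplex reflected about the origin between the two subspaces) so that the mandatory $\sum c_k = 0$ constraint makes every mixed representation strictly more expensive in $\ell_1$ than the same-subspace one. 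An explicit $\ell_1$ comparison along the lines of the subspace-detection analyses in \cite{sc-geometric,wang2015noisy} then yields SEP, so Theorem~\ref{thm_main} applies. This $\ell_1$ bookkeeping between same-subspace and mixed representations is the step I expect to carry the bulk of the technical burden.
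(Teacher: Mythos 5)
Your construction captures the right core mechanism --- the one the paper also uses --- namely that a per-column perturbation of size $\sigma_\ell/\sqrt{d}$ can annihilate the smallest singular value of a $d$-subset and collapse it into a $(d-1)$-dimensional subspace, destroying identifiability. The noise budget accounting (rank-one perturbation $-\sigma\,\vct u_d^{(\ell)}(\vct v_d^{(\ell)})^\top$ with a flat right singular vector) is correct. However, there is a genuine gap in the first half of the claim: you must prove that \emph{noiseless} SSC is clustering consistent on your $\mat X$, and via Theorem~\ref{thm_main} this requires SEP for the noiseless $\ell_1$ programs. You have placed the two subspaces so that $\dim(\mathcal S^{(1)}\cap\mathcal S^{(2)})=d-1$ and put all slab points within distance $O(\sigma/\sqrt{d})$ of the shared intersection $\mathcal S^{(0)}$. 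This is precisely the regime where SEP is most fragile: the cross-cluster inner products are near-maximal, the subspace affinity is essentially $\sqrt{d-1}$ out of $\sqrt{d}$, and the deterministic SEP conditions of \cite{sc-geometric,wang2015noisy} (dual direction incoherence versus inradius) are violated rather than merely hard to verify. Your hoped-for rescue --- forcing a $\sum_k c_k=0$ cancellation by arranging opposite simplex configurations --- does not obviously work, because the anchor $\vct q^{(\ell)}$ can supply the tiny out-of-intersection component of $\vct x_i$ at $\ell_1$ cost only $O(\sigma/\sqrt{d})$, after which the dominant $\mathcal S^{(0)}$-part of $\vct x_i$ is free to be represented by the other cluster's slab at a cost comparable to the same-cluster representation. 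You explicitly defer this ``$\ell_1$ bookkeeping,'' but it is not routine bookkeeping; it is the step most likely to fail, and without it the proposition is not established.

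The paper's proof avoids this difficulty entirely by making the two subspaces \emph{orthogonal} (so SEP, and hence noiseless consistency, is immediate) and instead placing the degeneracy \emph{within} each subspace: for $d=2$ it takes four points per subspace forming two nearly-parallel antipodal pairs with $\sigma_\ell=\sqrt{2}\epsilon$, and a perturbation of size $\epsilon=\sigma_\ell/\sqrt{2}$ snaps each pair onto one of two orthogonal lines. The noisy graph then has $2L$ connected components spanning mutually orthogonal one-dimensional subspaces, so neither spectral clustering nor any merging step can recover the true pairing. If you want to salvage your version, the simplest fix is to adopt the same design principle: keep $\mathcal S^{(1)}\perp\mathcal S^{(2)}$ (making the noiseless half of the claim trivial) and let the adversary exploit only the \emph{within-cluster} near-degeneracy, rather than relying on overlapping subspaces whose noiseless SEP you cannot currently certify. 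Separately, your ``no algorithm beats random guessing'' argument is information-theoretic over a family of admissible ground truths; you should check that the alternative noiseless configurations (with slab points exactly in $\mathcal S^{(0)}$) are themselves admissible instances, or argue operationally about what noisy SSC outputs, as the paper does.
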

\begin{proof}
	 It suffices to come up with one such example.
	 For the sake of simplicity we take intrinsic dimension $d=2$ with $L=4$ clusters.
	 \footnote{The construction of this counter-example can be easily extended to general $d$ cases, as we remark later.}
	 Consider a 2-dimensional subspace $\cS_1$ in $\mathbb R^n$ with orthogonal basis $\mat U_1\in\mathbb R^{n\times 2}$ and assume there are $4$ data points on the subspace represented by
	 $$
	 \mat X^{(1)}=\mat U_1 \mat Z = \mat U_1\begin{bmatrix}
	 1         &  -1 	   & \epsilon  & \epsilon\\
	 \epsilon  & \epsilon & 1 		   &  -1
	 \end{bmatrix}.
	 $$
	 The minimum singular value for the first two points is $\sigma_\ell = \sqrt{2\epsilon}$. This is also the minimum singular value of any pairs of the given points in the subspace. 
	 By taking $\xi =\epsilon = \sigma_\ell/\sqrt{2}$, we can contaminate the data with $\mat E$ to obtain observation data matrix $\mat Y$ as
	 $$
	 \mat Y^{(1)}=\mat U_1 \mat Z + \mat E = \begin{bmatrix}
	 1  &  -1 	   & 0  & 0\\
	 0  &  0	& 1 		   &  -1
	 \end{bmatrix}.
	 $$
	 Assume there is another subspace $\cS_2 \perp \cS_1$ with the four data points $\mat X^{(2)} = \mat U_2\mat Z$, 
	 and we contaminate them in the same fashion into $\mat Y^{(2)}$. 
	 Noiseless SSC on $\mat X$ is trivially clustering consistent by Theorem~\ref{thm_main}. 
	 Noisy SSC on $\mat Y$ however will construct a graph that has exactly 4 connected components with any $\lambda$ that returns a non-zero solution. These are:
	$$\{1,2\},\{3,4\},\{5,6\},\{7,8\}$$
	Spectral clustering algorithms that tries to partition the graph into $2$ parts will not be able to work better than random labeling. Similarly, Algorithm~2 will also fail because the subspace spanned by the noisy data points in each connected components are mutually orthogonal, and no ``merging'' procedure will be able to consistently recover the original subspace assignments.
\end{proof}

\begin{figure*}[t]
	\centering
	\includegraphics[width=0.48\textwidth]{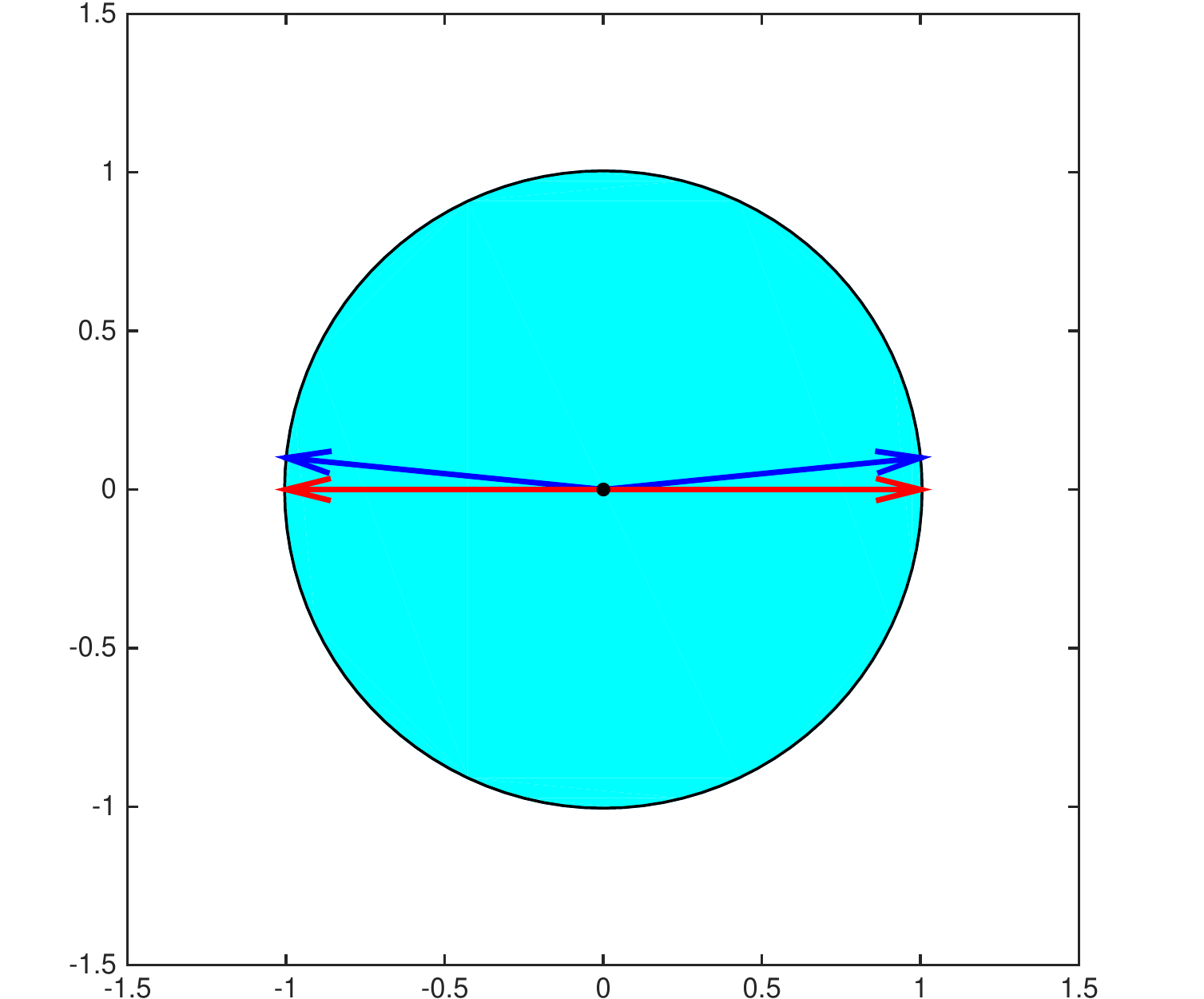}
	\includegraphics[width=0.48\textwidth]{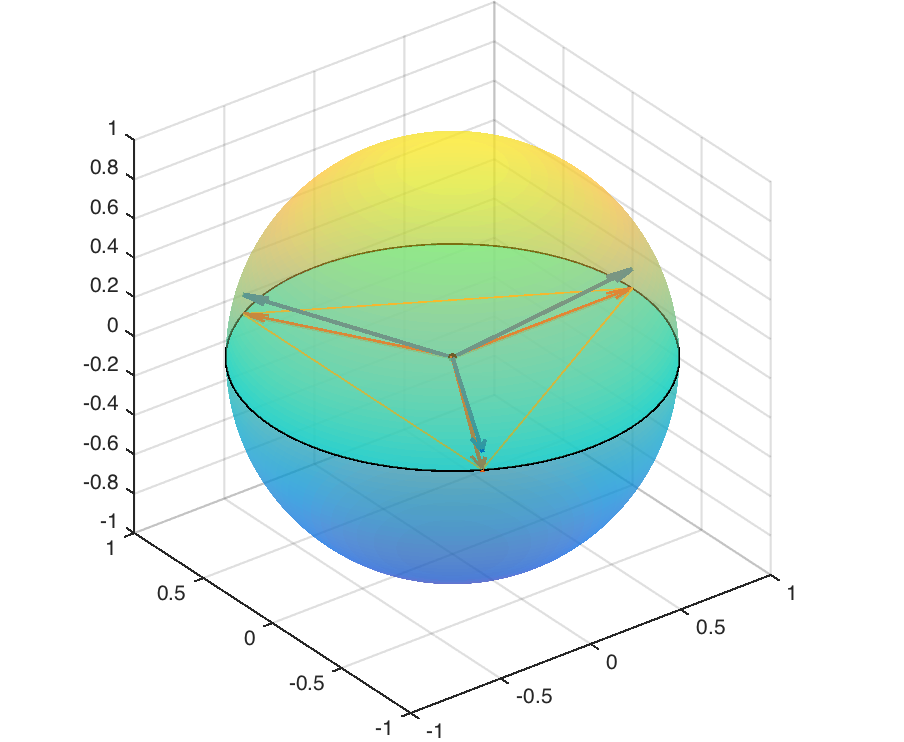}
	\caption{An illustration of counter-examples constructed in Proposition~\ref{prop_lowerbound}. \textbf{Left:} a 2D example. \textbf{Right:} a 3D example. The arrows in blue represent the noiseless data in general position. The arrows in red illustrate how a small perturbation of size $\sigma_{\ell}/\sqrt{d}$ can potentially break the general position assumption.}
	\label{fig_counterexample}
\end{figure*}

The high level idea of this example is that $\sigma_\ell$ measures how close the data points in subspace $\ell$ are from violating the general position assumption and therefore with an arbitrary perturbation of magnitude $\sigma_\ell$, we can change at least $d$ points to lie in an $(d-1)$-dimensional subspace, which renders the original problem non-identifiable.
\paragraph{Remark 5}
For any intrinsic dimension $d\geq 2$, we can construct a set of $d$ points in general position where one only needs to perturb each data point by $\sigma_\ell/\sqrt{d}$ to made them lie in a $d-1$ dimensional subspace space. 
Fix any orthonormal basis of $\mathbb R^d$ (without loss of generality we work under the standard basis $[\vct e_1,\cdots,\vct e_d]$).
The $d$ points are linear combinations of these basis with coefficients
$$\begin{bmatrix}
\vct \beta_1         &  \vct\beta_2 	   & ...  & \vct \beta_d\\
\sigma_\ell/\sqrt{d}  & \sigma_\ell/\sqrt{d}	& ... 		   &  \sigma_\ell/\sqrt{d}
\end{bmatrix}$$
where we set $\{\vct \beta_i\}$ to be the $d$ vertices of a symmetric simplex in $\R^{d-1}$ with centroid at the origin. Just to give a few examples, in $\R$ this is $\{-1,1\}$ and in $\R^2$ this is $\left\{ \begin{bmatrix} 1\\0
\end{bmatrix}, \begin{bmatrix} -0.5\\\sqrt{3}/2
\end{bmatrix}, \begin{bmatrix} -0.5\\-\sqrt{3}/2
\end{bmatrix} \right\}$.
The construction of such examples is illustrated in Figure~\ref{fig_counterexample}. In general, since all these vectors are orthogonal to $\vct e_d$, and the way they are constructed ensures that the top $d-1$ singular values are all identically $\sqrt{d/(d-1)}$, the minimum singular value will be exactly $\sigma_\ell$ and by adversarial perturbation of size $\sigma_\ell/\sqrt{d}$ on each data point we reduce all points to a $\R^{d-1}$ subspace and hence they are no longer in general position.

\section{SIMULATIONS}

\begin{figure*}[t]
\centering
\includegraphics[width=5cm]{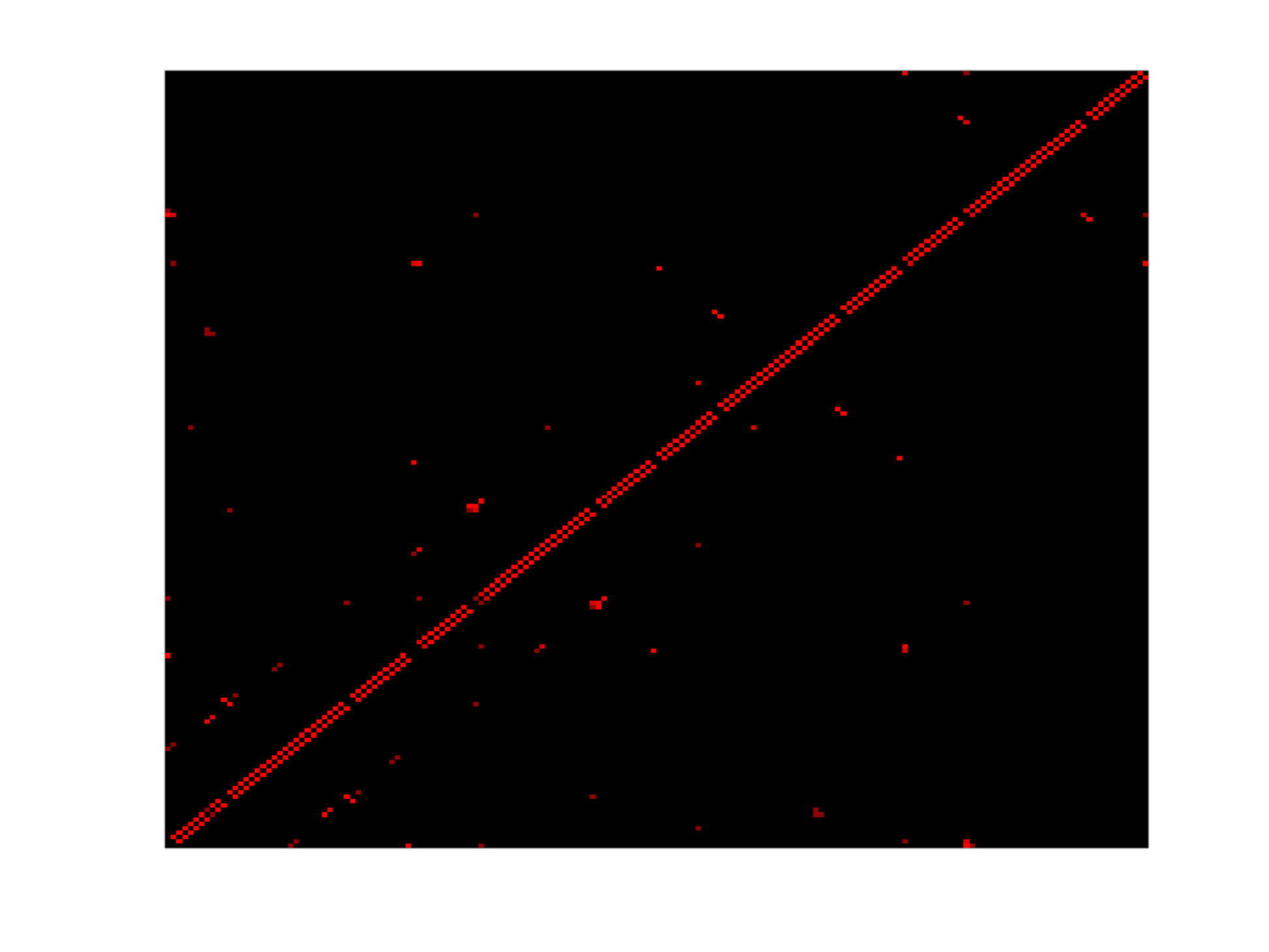}
\includegraphics[width=5cm]{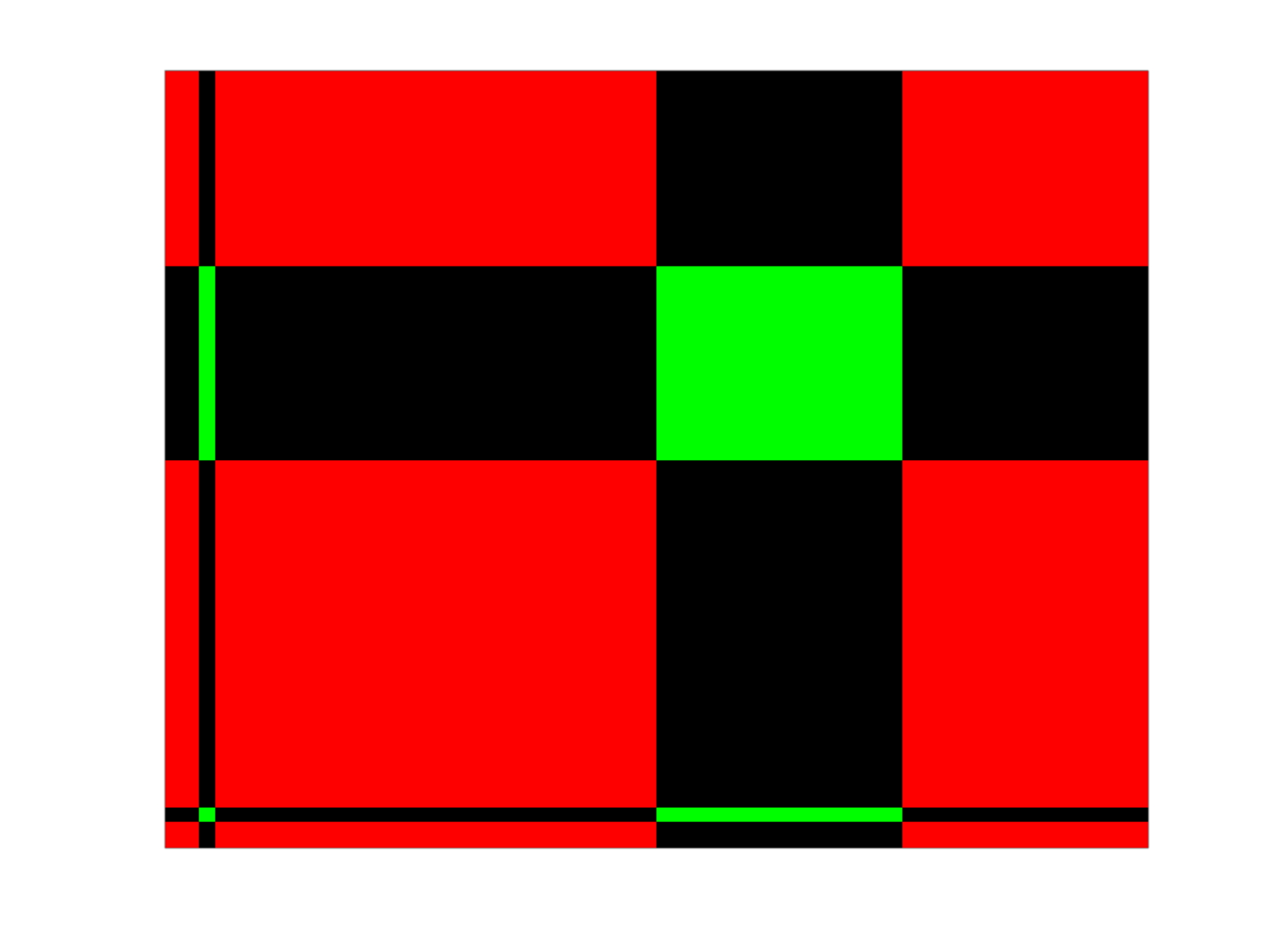}
\includegraphics[width=5cm]{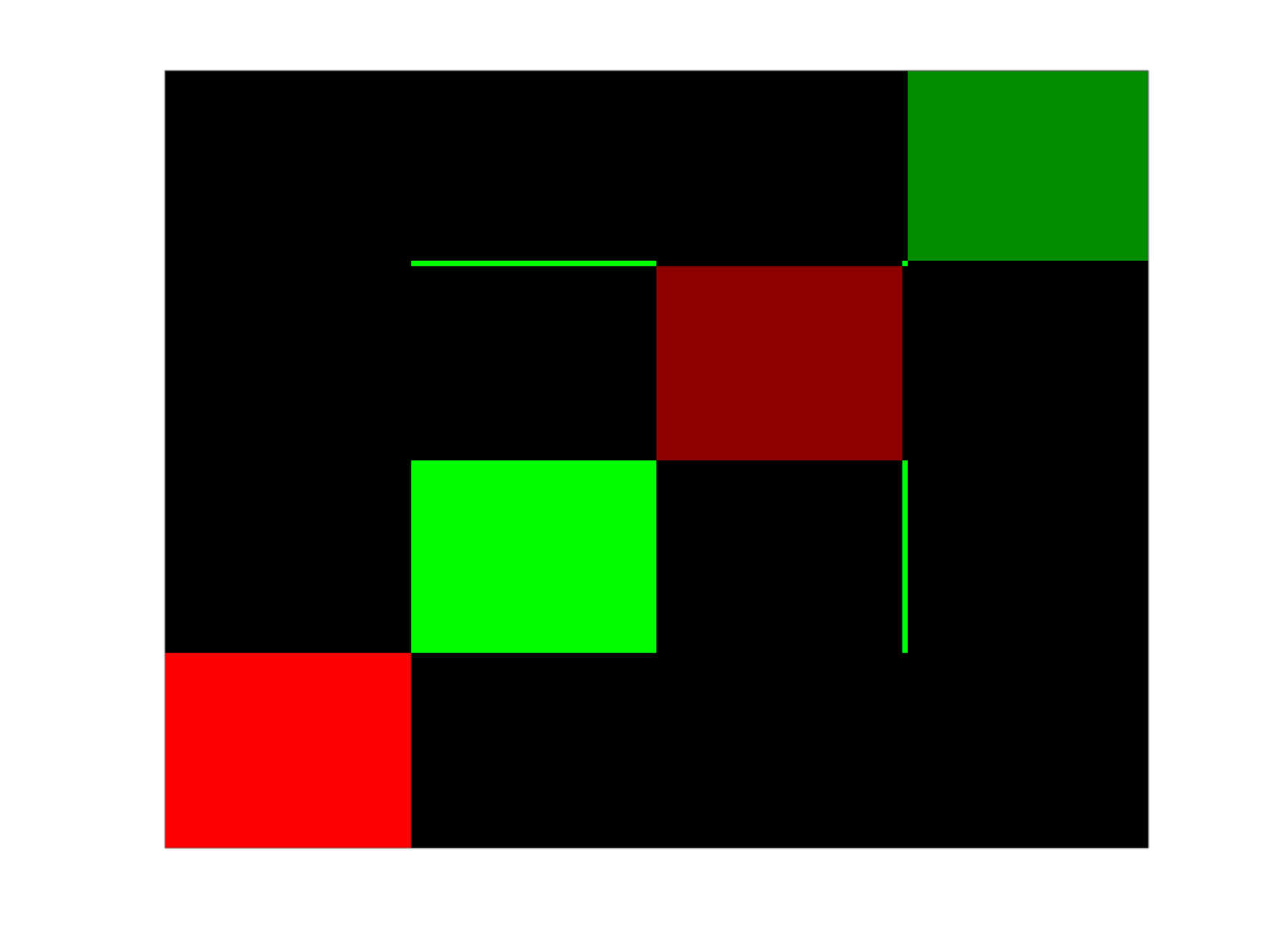}
\includegraphics[width=5cm]{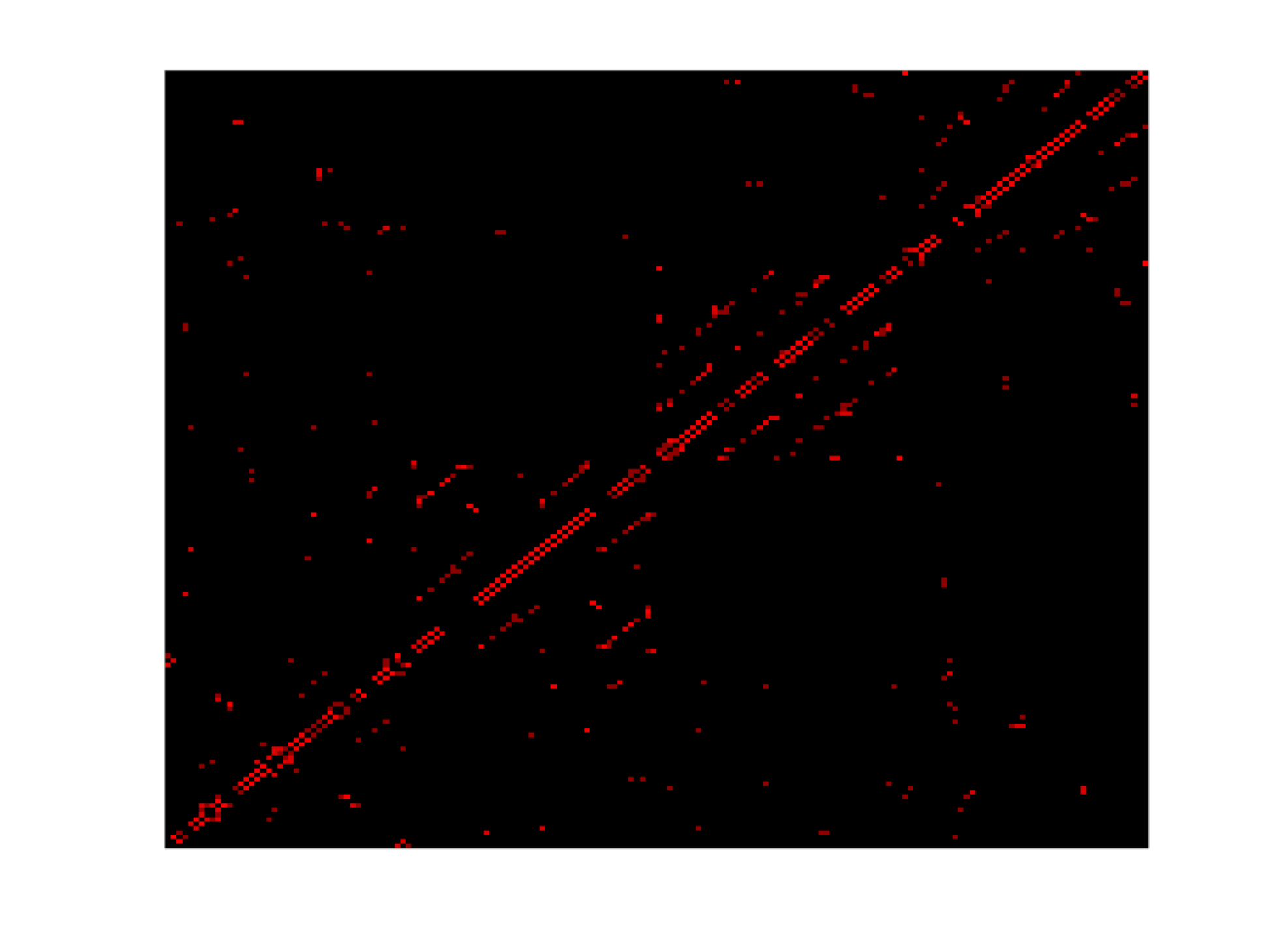}
\includegraphics[width=5cm]{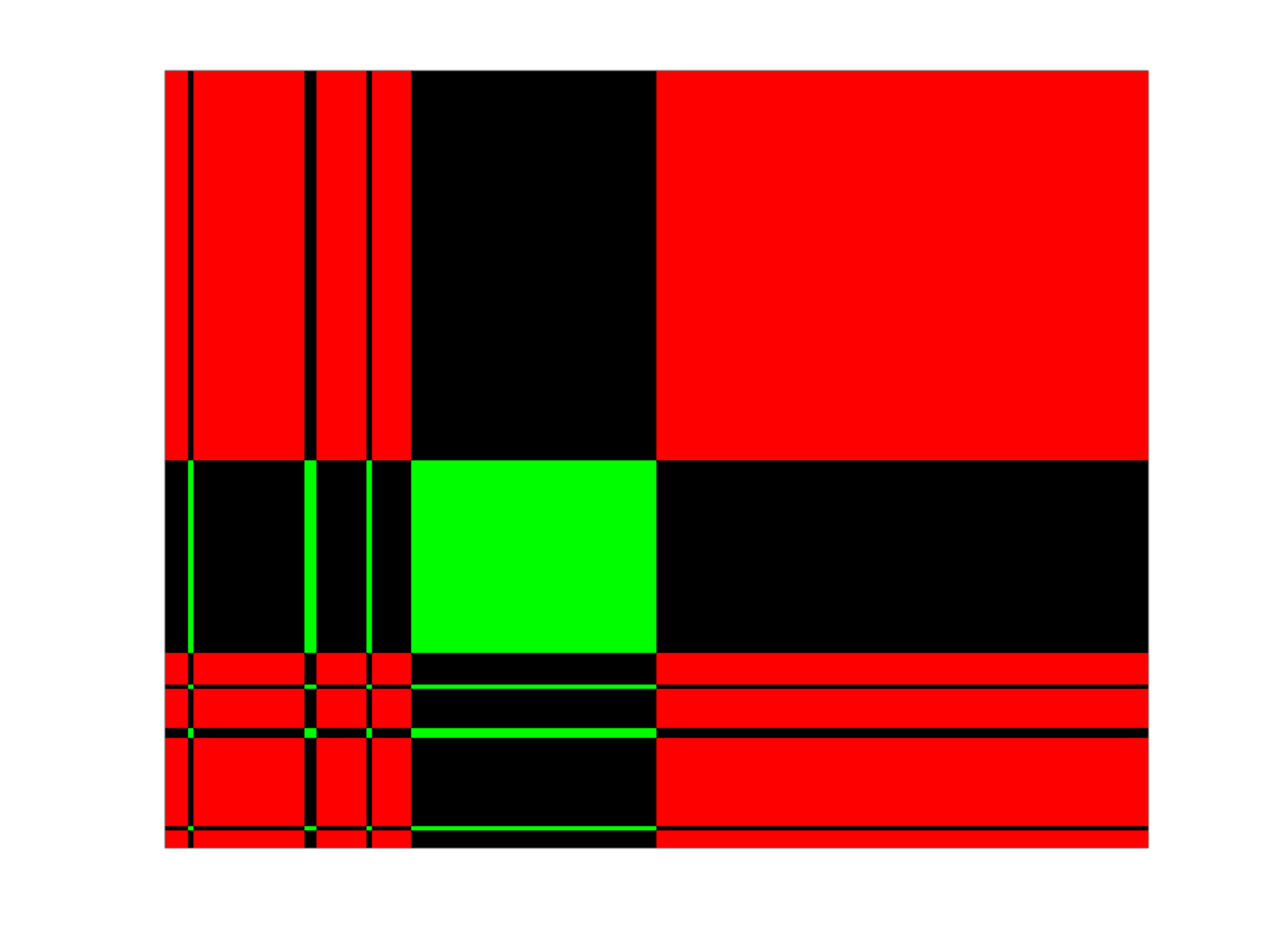}
\includegraphics[width=5cm]{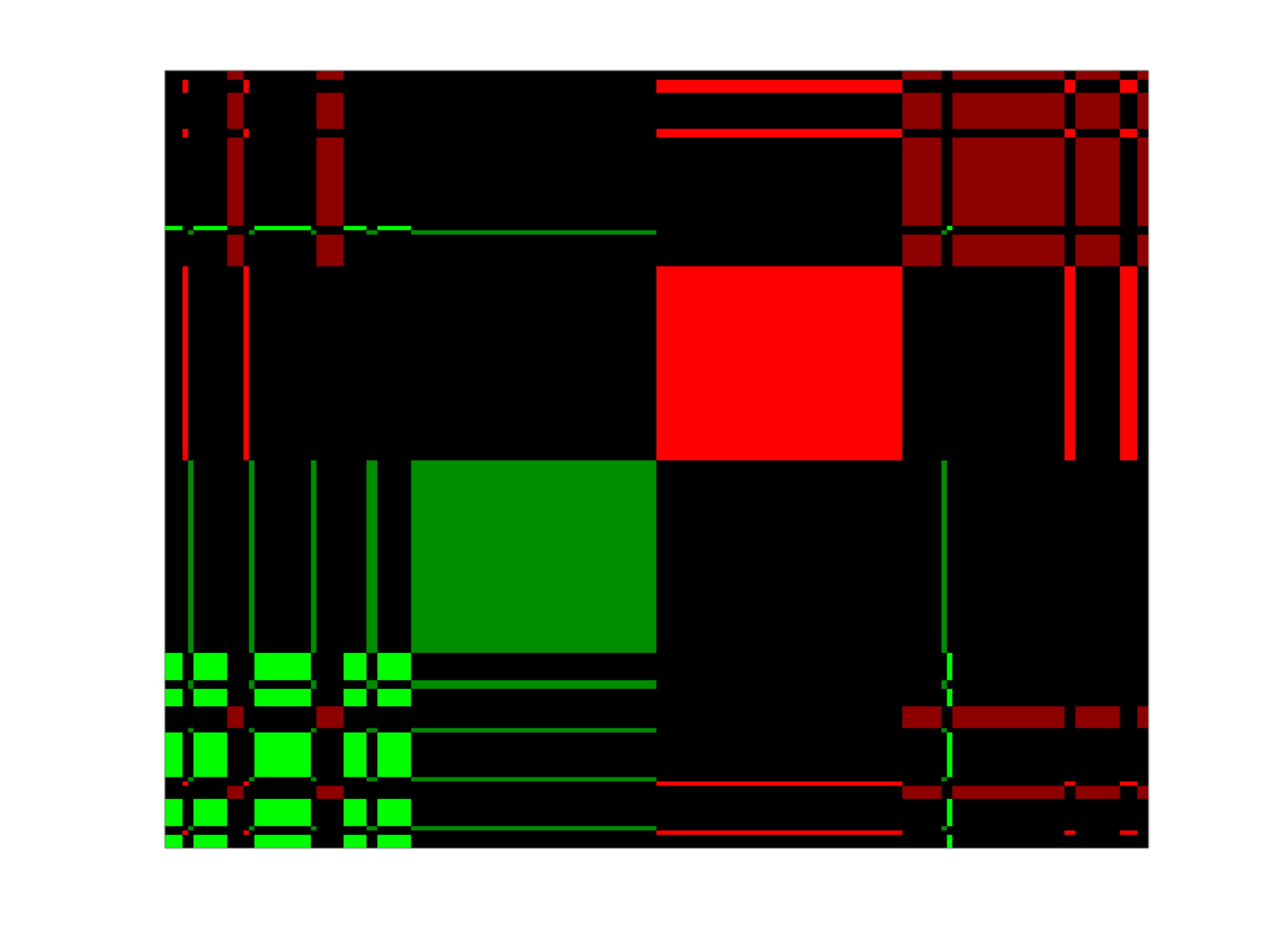}
\caption{Clustering analysis on noiseless (top) and noisy (bottom) data.
Left: similarity matrix produced by Lasso SSC.
Middle: spectral clustering on the similarity matrix, with 2 clusters.
Right: spectral clustering on the similarity matrix, with 4 clusters.}
\label{fig_all}
\end{figure*}

In this section we report simulation results of our proposed algorithms on the example constructed by Nasihatkon and Hartley in \cite{graph-connectivity}.
It was shown in \cite{graph-connectivity} that such an example will result in highly disconnected similarity graphs, and thus poses
a unique challenge for spectral clustering to recover the true clustering of data points.
In particular, consider 4-dimensional subspaces and for each subspace we generate data set $\mathcal A$ consisting of $8m$ data points in $\mathbb R^4$ as follows:
\begin{multline}
\mathcal A = \bigcup_{k=0}^{m-1}\bigcup_{s,s'\in\{\pm 1\}} \{(\cos\theta_k,\sin\theta_k,s\delta,s'\delta),\\
(s\delta,s'\delta,\cos\theta_k,\sin\theta_k)\};\quad \theta_k = k\pi/m,
\end{multline}
where $m\in\mathcal N^*$ and $\delta\in(0,1)$ are parameters for generating the data set.
Finally, the unnormalized observation matrix $\widetilde{\mat X}$ is constructed as
$$
\widetilde{\mat X} = \left[\mat W_1\mat A, \mat W_2\mat A\right],
$$
where $\mat W_1,\mat W_2\in\mathbb R^{n\times 4},n > 4$ are different linear operators that map a 4-dimensional vector to an $n$-dimensional ambient space.
%For simplicity we take both $\mat W_1$ and $\mat W_2$ as independent Gaussian matrices.
Finally, the input matrix $\mat X$ is obtained by normalizing $\widetilde{\mat X}$ so that each column has unit $\ell_2$ norm and then adding Gaussian white noise with entry-wise variance $\sigma^2/n$.
%for some magnitude parameter $\sigma$.

\begin{table}[t]
\centering
\caption{Relative Violation (Rel. Vio.) of SEP, clustering accuracy without post-processing (Acc. 1) and clustering accuracy with post-processing
(Acc. 2) for Lasso SSC on noiseless and noisy data.}
\vskip 0.1in
\begin{tabular}{lccc}
\hline
& Rel. Vio.& Acc. 1& Acc. 2\\
\hline
Noiseless& .03& .73& .99\\
Noisy& .09& .77& .93\\
\hline
\end{tabular}
%\begin{tabular}{lcc}
%\hline
%& noiseless& noisy\\
%\hline
%Rel. Vio.& .03&.09 \\
%Acc. 1& .73& .77\\
%Acc. 2& .99& .93\\
%\hline
%\end{tabular}
\label{tab_result}
\end{table}

Before presenting the simulation results we first make some remarks on the constructed dataset $\mat X$.
By construction, $\mat X$ has two overlaping 4-dimensional subspaces with probability 1, if both $\mat W_1$ and $\mat W_2$ are sampled uniformly from all orthogonal linear mappings from
$\mathbb R^4$ to $\mathbb R^n$.
Furthermore, noiseless data points in each cluster are in general position, provided that $m$ is a prime number.,
In \cite{graph-connectivity} it was shown that SSC tends to cluster data points in each cluster into two disjoint clusters.
Hence, the follow-up spectral clustering step cannot correctly merge the four learnt clusters into two without additional information.

In Figure \ref{fig_all} we plot the similarity graph learnt by Lasso SSC as well as spectral clustering results on both noiseless and noisy data.
The parameters for data generation are set as $n=5$, $m=11$, $\delta=0.2$, $\sigma=0.1$ and Lasso SSC parameter is set as $\lambda=10^{-3}$.
Figure \ref{fig_all} shows that the similarity graph is poorly connected and hence if we try to directly cluster the data points into two clusters (the middle column of the plots),
the spectral clustering algorithm fails completely.
On the other hand, it does a good job in clustering the data points into 4 clusters.
Subsequently, we could apply our proposed post-processing step by first computing the underlying low-dimensional subspace for each cluster and
then merge those subspaces that are close in angular distance.
As a result, near perfect clustering could be achieved on this synthetic dataset, as shown in Table \ref{tab_result}.
We also report the relative violation of SEP property
\footnote{The relative violation of SEP for a similarity graph $\mat C$ is defined as $\sum_{(i,j)\in E}{|\mat C_{ij}|}/\sum_{(i,j)\notin E}{|\mat C|_{ij}}$,
where $(i,j)\in E$ if and only if $\vct x_i$ and $\vct x_j$ belong to the same cluster.}
 in Table \ref{tab_result} to show that the SEP property is very well satisfied and is hence not a contributing factor for the poor performance of vanilla Lasso SSC.
 
\section{CONCLUSION}

In this paper we investigate graph connectivity in noisy sparse subspace clustering.
We propose a robust post-process step of noisy SSC that produces consistent clustering with high probability,
assuming the magnitude of noise is sufficiently small.
Our work is the first step toward noisy SSC with complete clustering guarantees, under the most general fully deterministic data model.
We next remark on several future directions along this line of research,
which could further improve the results presented in this paper.

Perhaps the most important limitation of Theorem \ref{thm_noisy_ssc} is the restricted eigenvalue assumption (Assumption \ref{asmp_re}).
Since it concerns the smallest singular value of the most ill-posed subset of $d$ data points,
we are really requiring the noise magnitude of $\xi$ to be extremely small.
In fact, we believe  $\sigma_\ell$ is exponentially small with respect to the number of data points per subspace,
assuming they are drawn uniformly from the unit low-dimensional sphere.
Although getting a better dependency over $\sigma_\ell$ is impossible under the adversarial noise model 
(as shown in Sec.~\ref{sec:discussion_asmpre}),
we conjecture that the assumption could be relaxed when noise are stochastic such as Gaussian white noise.

Another potential fruitful direction is to relax the requirement that the support of sparse regression for every data point
consists of at least $d$ other data points.
With less than $(d+1)$ data points in a connected component we can no longer approximately estimate the intrinsic low-dimensional subspace;
however, we might still be able to obtain some leading directions of the underlying subspace,
which could provide valuable information for the subspace merging step.
In fact, Soltanokoltabi et al. proved lower bounds on support size in robust subspace clustering
under the semi-random model setting \cite{robust-ssc}.
Though their bound is not as tight as $\Omega(d)$, it may benifit from some additional post-processing step
that attempts to merge over-clustered subspaces together.

\bibliographystyle{IEEE}
\bibliography{sc-note}

\newpage
\onecolumn

\begin{appendices}

\section{PROOFS OF THEOREMS FOR NOISELESS SSC}\label{appsec:proof_noiseless}

We prove Theorem \ref{thm_main}, the main theorem for the noiseless clustering consistent SSC algorithm given in Sec.~\ref{sec:noiseless}

\begin{proof}[Proof of Theorem \ref{thm_main}]
Fix a connected component $G_r=(V_r,E_r)\subseteq G$.
By the self-expressiveness property we know that all data points in $V_r$ lie on the same underlying subspace $\mathcal S^{(\ell)}$.
It can be easily shown that if $\mat X^{(\ell)}$ is in general position then $|V_r|\geq d_\ell+1$ because for any $\vct x_i\in\mathcal S^{(\ell)}$,
at least $d_\ell$ other data points in the same subspace are required to perfectly reconstruct $\vct x_i$.
Consequently, we have $\hat{\mathcal S}_{(r)}=\mathcal S^{(\ell)}$ because $V_r$ contains at least $d_\ell$ data points in $\mathcal S^{(\ell)}$ that are linear independent.
On the other hand, due to the self-expressiveness property,
for every $\ell=1,\cdots, L$ there exists a connected component $G_r$ such that $\hat{\mathcal S}_{(r)}=\mathcal S^{(\ell)}$
because otherwise nodes in $\mat X^{(\ell)}$ will have no edges attached, which contradicts Eq. (\ref{eq_exact_ssc}) and the definition of $G$.
As a result, the above argument shows that Algorithm \ref{alg_exact_ssc} achieves perfect subspace recovery;
that is, there exists a permutation $\pi$ on $[L]$ such that $\hat{\mathcal S}^{(\ell)}=\mathcal S^{(\pi(\ell))}$ for all $\ell=1,\cdots,L$.

We next prove that Algorithm \ref{alg_exact_ssc} achieves perfect clustering as well, that is, $\pi(\hat z_i)=z_i$ for every $i=1,\cdots, N$.
Assume by way of contradiction that there exists $i$ such that $\hat z_i=\ell$ and $z_i = \ell'\neq\pi(\ell)$.
Let $G_r=(V_r,E_r)\subseteq G$ be the connected component in $G$ that contains the node corresponding to $\vct x_i$.
Since $\hat z_i=\ell$, by SEP and the above analysis we have $\hat{\mathcal S}_{(r)}=\hat{\mathcal S}^{(\ell)}=\mathcal S^{(\pi(\ell))}$.
On the other hand, because $z_i=\ell'$ and data points in $V_r$ are in general position, we have $\hat{\mathcal S}_{(r)}=\mathcal S^{(\ell')}$.
Hence, $\mathcal S^{(\pi(\ell))}=\mathcal S^{(\ell')}$ with $\ell'\neq\pi(\ell)$, which contradicts the assumption that
no two underlying subspaces are identical.
\end{proof}

\section{DISCUSSION ON IDENTIFIABILITY AND $\ell_0$ FORMULATION OF NOISELESS SUBSPACE CLUSTERING}\label{appsec:identifiability}

\subsection{The identifiability of noiseless subspace clustering}
If we use a more relaxed notion of identifiability, even the ``general position'' assumption could be dropped for consistent clustering.
In Theorem \ref{thm:minimal_truth} we define such a relaxed notion of identifiability for the union-of-subspace structure.
\begin{thm}\label{thm:minimal_truth}
	Any set of $N$ data points in $\R^n$ has a partition that follows a union-of-subspace structure, where points in each subspaces are in general position. We call this partition the \emph{minimal} union-of-subspace structure.
\end{thm}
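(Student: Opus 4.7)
The existence claim is essentially immediate once one identifies a trivial witness, and the bulk of the theorem is a definitional unpacking; I would first exhibit any valid partition and then invoke finiteness on the partition lattice to upgrade it. Concretely, I would take the partition that places each nonzero data point $\vct x_i$ in its own singleton cluster $\{\vct x_i\}$ inside the one-dimensional subspace $\span(\vct x_i)$. Definition \ref{defn_gp} with $d_\ell = 1$ only requires every singleton to be nonzero, so this singleton partition already realizes a union-of-subspace structure with general position inside every cluster, and the collection of valid partitions is nonempty. (Zero vectors among the $\vct x_i$ lie in every subspace and can be assigned arbitrarily, so assuming all points are nonzero is without loss of generality.)

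To sharpen this to a minimal partition, I would equip the finite set of partitions of $\{\vct x_1, \ldots, \vct x_N\}$ with the refinement partial order and restrict to valid partitions. Since the restricted set is finite and nonempty, it contains a maximal element $\mathcal{P}^* = \{X^*_1, \ldots, X^*_L\}$, meaning one whose clusters cannot be pairwise merged without violating general position inside the span of the merged set. Setting $\mathcal{S}^{(\ell)} = \span(X^*_\ell)$ then produces the desired minimal union-of-subspace structure, with ``minimal'' interpreted as admitting no proper coarsening among the valid partitions. The induction/termination argument here is trivial: starting from the singleton partition and greedily merging any mergeable pair strictly decreases the number of clusters, and cannot continue past $L = 1$.

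The main obstacle is not in proving existence but in reconciling the definite article in ``\emph{the} minimal union-of-subspace structure'' with potential non-uniqueness. Maximal elements under refinement need not be unique: for instance, the set $\{\vct e_1, \vct e_2, \vct e_1 + \vct e_2, \vct e_3\} \subset \R^3$ admits the two distinct coarsest valid partitions $\{\vct e_1, \vct e_2, \vct e_1+\vct e_2\} \cup \{\vct e_3\}$ and $\{\vct e_1, \vct e_2, \vct e_3\} \cup \{\vct e_1+\vct e_2\}$. I would therefore state the theorem as asserting existence of \emph{some} minimal structure, and defer any canonical tie-breaking rule (e.g., minimizing $\sum_\ell d_\ell$, or fixing an explicit greedy merging procedure) to a separate definition should uniqueness be required downstream for identifiability arguments.
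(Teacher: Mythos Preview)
Your proof is correct and genuinely different from the paper's. The paper argues \emph{top-down}: it initializes $\mathcal Y=\mathcal X$ and, for $k=1,2,\ldots$, repeatedly extracts a maximal subset of $\mathcal Y$ lying in some $k$-dimensional subspace, declares it a cluster, and removes it; termination is immediate since $\mathcal Y$ shrinks. You argue \emph{bottom-up}: the singleton partition is already valid (each nonzero point is trivially in general position inside its own line), and finiteness of the partition lattice yields a coarsest valid partition, which you reach by greedy pairwise merges.

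Both arguments are sound existence proofs, and both correctly flag non-uniqueness (the paper does so in the paragraph immediately following its proof). Two small trade-offs are worth noting. First, the paper's explicit extraction procedure is reused verbatim in the proof of the subsequent $\ell_0$-SSC theorem, where the greedy low-dimension-first construction is matched against the $\ell_0$ solutions; your lattice/merging argument, while cleaner for this theorem in isolation, would not plug into that later proof without modification. Second, your reduction of ``maximal under refinement'' to ``no valid pairwise merge'' implicitly uses the monotonicity fact that if $A\cup B\cup C$ is in general position in its span then so is $A\cup B$ in its (possibly smaller) span; this is true and easy, but you may want to say it explicitly rather than leave it tacit.
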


\begin{proof} %[Proof of Theorem \ref{thm:minimal_truth}]
	Given a finite set $\mathcal{X} \subset \R^n$. We will algorithmically construct a minimal partition. Initialize set $\mathcal{Y} = \mathcal{X}$. Start with $k=1$, do the following repeatedly until it fails, then increment $k$, until $\mathcal{Y}=\emptyset$: find the maximum number of points that lie in a hyperplane of dimension $(k+1)$, assign a new partition for these points and remove these points from $\mathcal{Y}$.
	It is clear that in this way, every partition is a distinct subspace and points in any subspace are in general position.
\end{proof}

 One consequence of Theorem \ref{thm:minimal_truth} is that if SEP holds with respect to any minimal union-of-subspace structure (i.e., a minimal ground truth), then Algorithm~\ref{alg_exact_ssc} will recover the correct ground truth clustering.
 {We remark that SEP does not hold for any finite subset of points in $\mathbb R^n$ if $\ell_1$ regularization is used,
 unless the data satisfy certain separation conditions \cite{sc-geometric}.
 However, in Section \ref{subsec:agnostic-sc} we propose an $\ell_0$ regularization problem which achieves SEP (and hence consistent clustering)
 for any $\mathcal X\subseteq\mathbb R^d$.}

 We note that the minimal union-of-subspace structure may not be unique. An example is that if there is one point in the intersection of two subspaces with equal dimension, then this point can be assigned to either subspaces. Now, suppose the intersection has dimension $k$, there can be at most $k$ points in the intersection, otherwise these points will form a new $k$-dimension subspace and the original structure is no longer minimal.

\subsection{The merit of $\ell_0$-minimization and agnostic subspace clustering}\label{subsec:agnostic-sc}
A byproduct of our result is that it also addresses an interesting question of whether it is advantageous to use $\ell_0$ over $\ell_1$ minimization in subspace clustering, namely
\begin{equation}
\min_{\vct c_i\in\mathbb R^N} \;\|\vct c_i\|_0,\;\;
s.t.\;\;\vct x_i=\mat X\vct c_i, \vct c_{ii} = 0.\label{eq_l0_ssc}
\end{equation}
If one poses this question to a compressive sensing researcher, the answer will most likely be yes, since $\ell_0$ minimization is the original problem of interest and empirical evidence suggests that using iterative re-weighted $\ell_1$ scheme to approximate $\ell_0$ solutions often improves the quality of signal recovery. On the other hand, a statistician is most likely to answer just the opposite because $\ell_1$ shrinkage would often significantly reduce the variance at the cost of a small amount of bias. A formal treatment of the latter intuition suggests that $\ell_1$ regularized regression has strictly less ``effective-degree-of-freedom'' than the ``$\ell_0$ best-subset selection''  \cite{lassodegree}, therefore generalizes better.

How about subspace clustering? Unlike $\ell_1$ solution that is unique almost everywhere, $\ell_0$ solutions will not be unique and it is easy to construct a largely disconnected graph based on optimal $\ell_0$ solutions. Using the new observation that we do not actually need graph connectivity, we are able to establish that  $\ell_0$ minimization for SSC is indeed the ultimate answer for noiseless subspace clustering.
\begin{thm}\label{thm:l0}
Given any $N$ points in $\R^d$, any solutions to the $\ell_0$-variant of Algorithm~\ref{alg_exact_ssc} will partition the points into a minimal union-of-subspace structure.
\end{thm}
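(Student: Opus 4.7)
The approach is to use $\ell_0$-optimality to identify for each data point its ``local subspace,'' and then to show that the resulting grouping by local subspace is a minimal union-of-subspace structure in the sense of Theorem~\ref{thm:minimal_truth}.

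First, I would establish the structure forced by $\ell_0$-optimality on the supports. For each $i$, let $\mathcal{I}_i := \mathrm{supp}(\vct c_i^*)$. The columns $\{\vct x_j : j\in\mathcal{I}_i\}$ must be linearly independent, since any nontrivial linear dependence among them could be used to zero out a coefficient of $\vct c_i^*$ without changing $\mat X\vct c_i$, contradicting minimality of $\|\vct c_i\|_0$. Thus $\cS_i := \mathrm{span}\{\vct x_j : j\in\mathcal{I}_i\}$ has dimension exactly $|\mathcal{I}_i|$ and contains $\vct x_i$. Because $[\vct c_i^*]_j\neq 0$ for every $j\in\mathcal{I}_i$, the expansion $\vct x_i=\sum_k [\vct c_i^*]_k\vct x_k$ lets us swap $\vct x_j$ with $\vct x_i$ while still obtaining a basis of $\cS_i$, so $\{\vct x_i\}\cup\{\vct x_j : j\in\mathcal{I}_i\}$ is in general position in $\cS_i$. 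Moreover, $\cS_i$ is the \emph{smallest-dimensional} subspace through $\vct x_i$ containing at least $\dim\cS_i$ other data points---otherwise $\vct x_i$ would admit a sparser representation, contradicting $\ell_0$-optimality---so $\cS_i$ matches the greedy criterion used in the proof of Theorem~\ref{thm:minimal_truth}.

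Next, I would show that grouping points by equality of their local subspaces $\cS_i$ gives the partition output by the $\ell_0$-variant of Algorithm~\ref{alg_exact_ssc}, and verify its minimality directly. For minimality, fix a subspace $\cS$ appearing in $\{\cS_i\}_i$ and let $P_\cS:=\{i:\cS_i=\cS\}$. If some subset $\{\vct x_{i_1},\dots,\vct x_{i_k}\}\subseteq P_\cS$ with $k\le\dim\cS$ were linearly dependent, then some $\vct x_{i_t}$ would admit a representation of size at most $k-1$, giving $|\mathcal{I}_{i_t}|\le k-1<\dim\cS=|\mathcal{I}_{i_t}|$, a contradiction. Hence the points in each $P_\cS$ are in general position inside $\cS$, which is exactly the defining property of a minimal union-of-subspace structure.

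The main obstacle is aligning this local-subspace grouping with the connected-component output of Algorithm~\ref{alg_exact_ssc}. A single edge $(i,j)\in E$ only certifies $\vct x_j\in\cS_i$ (via $j\in\mathcal{I}_i$) or symmetrically, which does not by itself force $\cS_j=\cS_i$. Closing the gap requires a structural lemma showing that $|\mathcal{I}_i|$ is constant along any connected component of $G$, after which the shared containment can be lifted to $\cS_i=\cS_j$ by dimension matching. I expect this lemma---together with care in handling the nonuniqueness of $\ell_0$-optimal solutions---to be the technical heart of the argument.
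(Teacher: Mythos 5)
Your opening moves coincide with the paper's: both arguments define, for each point $\vct x_i$, the ``local'' subspace $\cS_i$ spanned by an $\ell_0$-optimal support of \eqref{eq_l0_ssc}, and both rest on the observation that the support columns must be linearly independent. Your verification that the points assigned to a common subspace are in general position inside it is correct, and is a detail the paper glosses over. The difficulty is exactly where you locate it, but the bridge you propose does not hold. If $j\in\mathrm{supp}(\vct c_i^*)$, then solving the representation $\vct x_i=\sum_k[\vct c_i^*]_k\vct x_k$ for $\vct x_j$ gives only the one-sided bound $\|\vct c_j^*\|_0\le\|\vct c_i^*\|_0$, and the inequality can be strict, so $\|\vct c_i^*\|_0$ is \emph{not} constant on connected components. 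Concretely, take four points in general position in a plane $\cP\subset\R^3$ together with two generic points $\vct x_5,\vct x_6\notin\cP$ lying in no $2$-dimensional span of data points: then $\|\vct c_5^*\|_0=3$ and its support must contain two points of $\cP$ (three coplanar points are dependent), so the whole graph collapses into a single connected component mixing points with $\dim\cS_i=2$ and $\dim\cS_i=3$, and $\range(\mat X_{V_r})=\R^3$ equals neither local subspace. Your planned lemma is therefore false, and the ``dimension matching'' lift cannot get started.

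The paper's own proof avoids reasoning about the graph altogether: it orders the points by $\dim\cS_i$ ascending and argues that merging the local subspaces in that order reproduces the greedy peeling construction from the proof of Theorem~\ref{thm:minimal_truth} (lowest-dimensional, most populated subspaces are extracted first), so the resulting partition is a minimal union-of-subspace structure essentially by construction. That argument is itself terse---it too never reconciles the per-point subspaces with the literal connected-component output of Algorithm~\ref{alg_exact_ssc}---but it indicates the route you would need: work directly with the multiset $\{\cS_i\}_{i=1}^N$ and show the induced grouping is minimal, rather than trying to establish a regularity property of the similarity graph that the example above rules out.
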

\begin{proof} %[Proof of Theorem \ref{thm:l0}]
	Define a \emph{minimal} subspace with respect to point $\vct x_i$ in a set $\{\vct x_i\}_{i=1}^N$ to be the span of any points that minimizes \eqref{eq_l0_ssc} for $i$. Since the ordering of how data points are used does not matter in Algorithm~\ref{alg_exact_ssc}, we can sort the points into an ascending order with respect to the dimensionality. Now the merging procedure of these subspaces into a unique set of subspaces is exactly the same as the construction in the proof of Theorem~\ref{thm:minimal_truth}. Therefore, all solutions of the $\ell_0$ SSC are going to be the correct partition.
\end{proof}

With slightly more effort, it can be shown that the converse is also true. Therefore, the set of solutions of $\ell_0$-SSC completely characterizes the set of minimal union-of-subspace structure for any set of points in $\R^d$. In contrast, $\ell_1$-SSC requires additional separation condition to work. That said, it may well be the case in practice that $\ell_1$-SSC works better for the noisy subspace clustering in the low signal-to-noise ratio regime. It will be an interesting direction to explore how iterative reweighted $\ell_1$ minimizations and local optimization for $\ell_p$-norm ($0<p<1$) work in subspace clustering applications.

\section{PROOFS OF THEOREMS FOR NOISY SSC}\label{appsec:proof_noisy}

The purpose of this section is to present a complete proof to Theorem \ref{thm_noisy_ssc},
our main result concerning clustering consistent Lasso SSC on noisy data.
We first present and prove two technical propositions that will be used later.
\begin{prop}
Let $\vct u$ be an arbitrary vector in $\mathcal S^{(\ell)}$ with $\|\vct u\|_2=1$.
Then $\max_{1\leq i\leq N_\ell,i\neq i^*}|\langle\vct u,\vct x_i^{(\ell)}\rangle| \geq \rho_\ell^{-i^*}$
for every $i^*=1,\cdots,N_\ell$.
\label{prop_innerprod_lb}
\end{prop}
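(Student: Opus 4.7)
The plan is to recognize this as a standard polar-duality identity between the inradius of a symmetric convex hull and the minimum ``max-$|\langle\vct u,\cdot\rangle|$'' norm on the ambient unit sphere. Fix $i^*$ and define $K := \conv(\pm\vct x_1^{(\ell)},\ldots,\pm\vct x_{i^*-1}^{(\ell)},\pm\vct x_{i^*+1}^{(\ell)},\ldots,\pm\vct x_{N_\ell}^{(\ell)})$, which is a symmetric convex body sitting inside $\mathcal S^{(\ell)}$. First I would reduce to the case in which the remaining $N_\ell-1$ points span $\mathcal S^{(\ell)}$; otherwise $\rho_\ell^{-i^*}=0$ and the inequality is vacuous.

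The key identity to establish is
\begin{equation*}
\rho_\ell^{-i^*} \;=\; r(K) \;=\; \min_{\vct y\in\mathcal S^{(\ell)},\,\|\vct y\|_2=1}\;\max_{j\neq i^*}|\langle\vct y,\vct x_j^{(\ell)}\rangle|.
\end{equation*}
Once this is in hand, the proposition is immediate: apply the identity to the specific unit vector $\vct u\in\mathcal S^{(\ell)}$, whose max inner product is at least the minimum taken over all unit vectors.

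To prove the identity, I would invoke polar duality \emph{within} $\mathcal S^{(\ell)}$. The polar of $K$ relative to $\mathcal S^{(\ell)}$ is $K^\circ = \{\vct y\in\mathcal S^{(\ell)}:|\langle\vct y,\vct x_j^{(\ell)}\rangle|\leq 1\;\text{for all } j\neq i^*\}$, because the support function of $K$ at $\vct y$ equals $\max_{j\neq i^*}|\langle\vct y,\vct x_j^{(\ell)}\rangle|$. The inclusion $\{\vct y\in\mathcal S^{(\ell)}:\|\vct y\|_2\leq r\}\subseteq K$ is equivalent, by taking polars, to $K^\circ\subseteq\{\vct y\in\mathcal S^{(\ell)}:\|\vct y\|_2\leq 1/r\}$, so $r(K)=1/\max_{\vct y\in K^\circ}\|\vct y\|_2$. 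Since $K^\circ$ is exactly the unit ball in $\mathcal S^{(\ell)}$ of the norm $f(\vct y):=\max_{j\neq i^*}|\langle\vct y,\vct x_j^{(\ell)}\rangle|$, its $\ell_2$-circumradius equals $1/\min_{\|\vct y\|_2=1,\,\vct y\in\mathcal S^{(\ell)}}f(\vct y)$, and the displayed identity follows by composing the two reciprocals.

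The only delicate point, and therefore the thing to treat carefully, is the restriction to $\mathcal S^{(\ell)}$: doing polar duality in the ambient $\mathbb R^n$ would produce an unbounded dual (since $K$ has empty interior in $\mathbb R^n$ whenever $d_\ell<n$) and a meaningless value $r(K)=0$. Working with the induced inner product on $\mathcal S^{(\ell)}$, so that $K$, $K^\circ$, and the $\ell_2$-ball all live in the same $d_\ell$-dimensional space, is what keeps both sides finite and makes the identity tight.
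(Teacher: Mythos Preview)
Your proof is correct, and the underlying geometric fact you exploit is the same one the paper uses, but you package it quite differently. The paper's argument is entirely one-directional and elementary: since the ball of radius $\rho_\ell^{-i^*}$ (within $\mathcal S^{(\ell)}$) is inscribed in $K$, the point $\rho_\ell^{-i^*}\vct u$ lies in $K$ and can therefore be written as $\mat X_{-i^*}^{(\ell)}\vct c$ with $\|\vct c\|_1\le 1$; then by the dual-norm identity $\|\mat X_{-i^*}^{(\ell)\top}\vct u\|_\infty=\max_{\|\vct c\|_1\le 1}\langle\vct u,\mat X_{-i^*}^{(\ell)}\vct c\rangle\ge\langle\vct u,\rho_\ell^{-i^*}\vct u\rangle=\rho_\ell^{-i^*}$. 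No polar duality, no identity, no circumradius computation---just ``point in convex hull $\Rightarrow$ convex combination $\Rightarrow$ plug in.''

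Your route instead proves the stronger two-sided identity $r(K)=\min_{\|\vct y\|_2=1}\max_{j\neq i^*}|\langle\vct y,\vct x_j^{(\ell)}\rangle|$ via polar duality in $\mathcal S^{(\ell)}$ and then reads off the one inequality you need. This buys a cleaner conceptual picture (inradius as the reciprocal of the circumradius of the polar, i.e., the minimum of the support function over the unit sphere) and is the ``right'' way to understand why inradius is the relevant quantity. The paper's version buys brevity: it sidesteps the polar machinery and the degenerate-span case entirely, and only ever uses the inclusion $B(0,\rho_\ell^{-i^*})\subseteq K$, which is exactly what the definition of inradius provides. Your careful remark about restricting polar duality to $\mathcal S^{(\ell)}$ is well taken and is implicitly what the paper does by only considering $\vct u\in\mathcal S^{(\ell)}$.
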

\begin{proof}
For notational simplicity let $\mat X_{-i^*}^{(\ell)} = (\vct x_1^{(\ell)}, \cdots, \vct x_{i^*-1}^{(\ell)},\vct x_{i^*+1}^{(\ell)},
\cdots, \vct x_{N_\ell}^{(\ell)})$
and $\mathcal Q_{-i^*}^{(\ell)} = \conv(\pm\mat X_{-i^*}^{(\ell)})$.
The objective of Proposition \ref{prop_innerprod_lb} is to lower bound $\|\mat X_{-i^*}^{(\ell)^\top}\vct u\|_{\infty}$
for any $\vct u\in\mathcal S^{(\ell)}$ with $\|\vct u\|_2 = 1$.
By definition of the dual norm, $\|\mat X_{-i^*}^{(\ell)^\top}\vct u\|_{\infty}$ is equal to the objective of the following optimization problem
\begin{equation}
\max_{\vct c\in\mathbb R^{N_\ell-1}}\langle\vct u, \mat X_{-i^*}^{(\ell)}\vct c\rangle\quad
s.t.\;\;\|\vct c\|_1 = 1.
\label{eq_infty_dualnorm}
\end{equation}
To obtain a lower bound on the objective of Eq. (\ref{eq_infty_dualnorm}),
note that $\rho_\ell^{-i^*}$ is the radius of the largest ball inscribed in $\mathcal Q_{-i^*}^{(\ell)}$
and hence $\rho_\ell^{-i^*}\vct u\in\mathcal Q_{-i^*}^{(\ell)}$.
Consequently, $\rho_\ell^{-i^*}\vct u$ can be written as a convex combination of (signed) columns in $\mat X_{-i^*}^{(\ell)}$,
that is, there exists $\vct c\in\mathbb R^{N_\ell-1}$ with $\|\vct c\|_1=1$ such that
$\mat X_{-i^*}^{(\ell)}\vct c = \rho_\ell^{-i^*}\vct u$.
Plugging the expression into Eq. (\ref{eq_infty_dualnorm}) we obtain
$$
\|\mat X_{-i^*}^{(\ell)^\top}\vct u\|_\infty \geq \langle\vct u, \rho_\ell^{-i^*}\vct u\rangle = \rho_\ell^{-i^*}.
$$
\end{proof}

\begin{prop}
%\red{[Edit] I think this proposition needs to be stated for any vector, rather than just $\vct a_i$0}
Let $\mat A=(\vct a_1,\cdots, \vct a_m)$ be an arbitrary matrix with at least $m$ rows. Then
$\|\vct a_i - \mathcal P_{\range(\vct a_{-i})}(\vct a_i)\|_2 \geq \sigma_m(\mat A)$,
where $\vct a_{-i}$ denotes all columns in $\mat A$ except $\vct a_i$.
\label{prop_perp_norm}
\end{prop}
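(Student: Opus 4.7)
The plan is to express the residual $\vct r_i := \vct a_i - \mathcal P_{\range(\vct a_{-i})}(\vct a_i)$ as the image under $\mat A$ of a vector whose $\ell_2$-norm is bounded below, and then invoke the standard singular value inequality.

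First I would write the projection explicitly: since $\mathcal P_{\range(\vct a_{-i})}(\vct a_i) \in \range(\vct a_{-i})$, there exist coefficients $\{c_j\}_{j \neq i}$ such that $\mathcal P_{\range(\vct a_{-i})}(\vct a_i) = \sum_{j \neq i} c_j \vct a_j$. Define $\vct w \in \mathbb R^m$ by $w_i = 1$ and $w_j = -c_j$ for $j \neq i$. Then by construction
\[
\vct r_i = \vct a_i - \sum_{j \neq i} c_j \vct a_j = \mat A \vct w.
\]

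Next I would lower-bound $\|\vct w\|_2$ trivially: since $w_i = 1$, we have $\|\vct w\|_2 \geq |w_i| = 1$. Finally, because $\mat A$ has at least $m$ rows (so it is a tall or square matrix with $m$ columns), the smallest singular value $\sigma_m(\mat A)$ satisfies $\|\mat A \vct u\|_2 \geq \sigma_m(\mat A)\|\vct u\|_2$ for every $\vct u \in \mathbb R^m$. Applying this to $\vct u = \vct w$ yields
\[
\|\vct r_i\|_2 = \|\mat A \vct w\|_2 \geq \sigma_m(\mat A)\|\vct w\|_2 \geq \sigma_m(\mat A),
\]
which is the desired bound.

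There is no serious obstacle here; the only care needed is to ensure that $w_i$ can be pinned to $1$ in the representation of $\vct r_i$ as $\mat A \vct w$, which is automatic from the definition of the residual. The hypothesis on the number of rows is what licenses the singular value inequality, and the lower bound $\|\vct w\|_2 \geq 1$ turns the homogeneous bound into an absolute one.
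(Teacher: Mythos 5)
Your proof is correct, and it takes a genuinely different (if equally elementary) route from the paper's. The paper tests the variational characterization of $\sigma_m$ against the residual vector itself: writing $\vct a_i^\perp = \vct a_i - \mathcal P_{\range(\vct a_{-i})}(\vct a_i)$, it uses the orthogonality $\langle \vct a_i^\perp,\vct a_{i'}\rangle = 0$ for $i'\neq i$ to compute $\|\mat A^\top \vct a_i^\perp\|_2 = |\langle\vct a_i,\vct a_i^\perp\rangle| = \|\vct a_i^\perp\|_2^2$, whence $\sigma_m(\mat A) \leq \|\mat A^\top\vct a_i^\perp\|_2/\|\vct a_i^\perp\|_2 = \|\vct a_i^\perp\|_2$. (As printed, the paper applies $\mat A$ rather than $\mat A^\top$ to the $n$-dimensional vector $\vct a_i^\perp$, a dimensional slip your argument avoids entirely.) You instead work on the coefficient side: the residual equals $\mat A\vct w$ with the $i$th coordinate of $\vct w$ pinned to $1$, so $\|\vct w\|_2\geq 1$ and $\|\mat A\vct w\|_2 \geq \sigma_m(\mat A)\|\vct w\|_2$ finishes. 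Note that your argument never uses orthogonality of the projection, only that $\mathcal P_{\range(\vct a_{-i})}(\vct a_i)$ lies in the span of the other columns; it therefore proves the slightly stronger fact that $\|\vct a_i - \vct v\|_2 \geq \sigma_m(\mat A)$ for \emph{every} $\vct v\in\range(\vct a_{-i})$, i.e., the full distance from $\vct a_i$ to the span of the remaining columns is bounded below by $\sigma_m(\mat A)$. The paper's version buys a clean interpretation via the dual norm computation; yours is marginally more robust and degenerates gracefully when $\vct a_i^\perp = \vct 0$ (where the paper's quotient is formally $0/0$ but the claim is trivial since $\sigma_m(\mat A)=0$).
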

\begin{proof}
%If $\mat A$ does not have full column rank then the proposition clearly holds because $\sigma_m(\mat A)=0$.
%In the remainder of the proof we assume that $\mat A$ has full column rank.
Denote $\vct a_i^\perp$ as $\vct a_i^\perp = \vct a_i - \mathcal P_{\range(\vct a_{-i})}(\vct a_i)$.
By definition, $\vct a_i^\perp\in\range(\mat A)$
and $\langle\vct a_i^\perp, \vct a_{i'}\rangle = 0$ for all $i'\neq i$.
Consequently,
$$
\sigma_m(\mat A) \leq \inf_{\vct u\in\range(\mat A)}\frac{\|\mat A\vct u\|_2}{\|\vct u\|_2}
\leq \frac{\|\mat A\vct a_i^\perp\|_2}{\|\vct a_i^\perp\|_2}
= \frac{\langle\vct a_i,\vct a_i^\perp\rangle}{\|\vct a_i^\perp\|_2}
= \frac{\|\vct a_i^\perp\|_2^2}{\|\vct a_i^\perp\|_2} = \|\vct a_i^\perp\|_2.
$$
\end{proof}

We next present two key lemmas.
The first lemma, Lemma \ref{lem_angdist_ub},
shows that the estimated subspace $\hat{\mathcal S}$ from noisy inputs is a good approximation
the underlying subspace $\mathcal S^{(\ell)}$ as long as the restricted eigenvalue assumption holds
and exactly $d$ points from the same subspace are used to construct $\hat{\mathcal S}$.

\begin{lem}
Fix $\ell\in\{1,\cdots, L\}$.
Suppose $\hat{\mathcal S}$ is the range of a subset of points $\mat Y_d\subseteq\mat Y^{(\ell)}$
containing exactly $d$ noisy data points belonging to the $\ell$th subspace.
Let $\mathcal S^{(\ell)}$ be the ground-truth subspace; i.e., $\vct x_1^{(\ell)},\cdots,\vct x_{N_\ell}^{(\ell)}\in\mathcal S^{(\ell)}$.
Under Assumption \ref{asmp_re} we have
\begin{equation}
d(\hat{\mathcal S}, \mathcal S^{(\ell)}) \leq \frac{2d\xi^2}{\sigma_\ell^2}.
\label{eq_angdist_ub}
\end{equation}
\label{lem_angdist_ub}
\end{lem}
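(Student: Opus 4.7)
The plan is to cast Lemma~\ref{lem_angdist_ub} as a standard subspace-perturbation estimate that directly exploits Assumption~\ref{asmp_re}. First I would invoke the restricted eigenvalue assumption: if $\mat X_d = \mat Y_d - \mat E_d \in \R^{n\times d}$ denotes the noiseless counterpart of the chosen points, then $\sigma_d(\mat X_d) \geq \sigma_\ell > 0$, so its $d$ columns (all lying in the $d$-dimensional subspace $\mathcal S^{(\ell)}$) are linearly independent and therefore $\range(\mat X_d) = \mathcal S^{(\ell)}$. Let $\mat U \in \R^{n\times d}$ be an orthonormal basis of $\mathcal S^{(\ell)}$ and $\mat P_{\hat{\mathcal S}}$ the orthogonal projector onto $\hat{\mathcal S}$. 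By the standard principal-angle identity,
$$d(\hat{\mathcal S},\mathcal S^{(\ell)}) \;=\; \bigl\|(\mat I - \mat P_{\hat{\mathcal S}})\mat U\bigr\|_F^2,$$
so the task reduces to upper bounding this Frobenius norm.

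The key algebraic step is to express $\mat U$ in terms of $\mat X_d$ and then let the noise decomposition do the work. Via a thin QR factorisation write $\mat X_d = \mat U\mat R$, whence $\mat U = \mat X_d \mat R^{-1}$ with $\|\mat R^{-1}\|_2 = 1/\sigma_d(\mat X_d) \leq 1/\sigma_\ell$. Since the columns of $\mat Y_d$ span $\hat{\mathcal S}$, we have $(\mat I - \mat P_{\hat{\mathcal S}})\mat Y_d = \mat 0$, hence
$$(\mat I - \mat P_{\hat{\mathcal S}})\mat U \;=\; (\mat I - \mat P_{\hat{\mathcal S}})(\mat Y_d - \mat E_d)\mat R^{-1} \;=\; -(\mat I - \mat P_{\hat{\mathcal S}})\mat E_d\mat R^{-1}.$$
Submultiplicativity of the Frobenius and spectral norms, combined with $\|\mat E_d\|_F \leq \xi\sqrt{d}$ (each of the $d$ columns has norm at most $\xi$) and $\|\mat I - \mat P_{\hat{\mathcal S}}\|_2 \leq 1$, then gives a bound of order $d\xi^2/\sigma_\ell^2$, matching the stated inequality up to a harmless constant that depends on how tightly one chains the intermediate estimates.

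I do not expect a serious obstacle. The only technicality is to make sure that $\hat{\mathcal S}$ really has dimension $d$ so that the principal-angle expression is well defined; this follows immediately from Weyl's inequality, $\sigma_d(\mat Y_d) \geq \sigma_d(\mat X_d) - \|\mat E_d\|_2 \geq \sigma_\ell - \xi > 0$, under the noise bound \eqref{eq_xi_ub}. The essential ingredients are therefore: (i) reduce the angular distance to a projection estimate using the orthonormal basis identity; (ii) transfer from $\mat U$ to $\mat X_d$ via the QR trick, with the restricted eigenvalue assumption giving the spectral norm control on $\mat R^{-1}$; and (iii) cancel the ``in-subspace'' contribution by exploiting $(\mat I - \mat P_{\hat{\mathcal S}})\mat Y_d = \mat 0$, leaving only the noise matrix $\mat E_d$ to be absorbed into the bound.
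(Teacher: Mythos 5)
Your proof is correct, and it takes a genuinely different route from the paper. The paper's proof is a two-line appeal to Wedin's $\sin\Theta$ theorem (Lemma \ref{lem_wedin}): it bounds $\|\mat Y_d-\mat X_d\|_F^2\leq d\xi^2$, notes $\sigma_d(\mat X_d)\geq\sigma_\ell$ from Assumption \ref{asmp_re}, and reads off the factor $2\|\mat E\|_F^2/\delta^2$ from the perturbation theorem. You instead give a direct, self-contained linear-algebra argument: write $\mat U=\mat X_d\mat R^{-1}$, kill the in-subspace part via $(\mat I-\mat P_{\hat{\mathcal S}})\mat Y_d=\mat 0$, and absorb $\mat E_d$ through $\|\mat A\mat B\|_F\leq\|\mat A\|_F\|\mat B\|_2$. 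What your approach buys is (i) no black-box perturbation theorem, and (ii) a strictly better constant: you get $d(\hat{\mathcal S},\mathcal S^{(\ell)})\leq d\xi^2/\sigma_\ell^2$, i.e.\ you lose the factor $2$ that Wedin's symmetric treatment of left and right singular subspaces forces on the paper (the bound there controls $\|\sin\mat\Phi\|_F^2+\|\sin\mat\Theta\|_F^2$ jointly, only one of which is needed here). Two small points to tighten: in your dimension check you bound $\|\mat E_d\|_2\leq\xi$, but each \emph{column} of $\mat E_d$ has norm at most $\xi$, so the correct spectral bound is $\|\mat E_d\|_2\leq\|\mat E_d\|_F\leq\xi\sqrt{d}$, and the nondegeneracy condition you actually need is $\xi\sqrt{d}<\sigma_\ell$, which Eq.~(\ref{eq_xi_ub}) does not literally guarantee for large $d$ (the paper silently glosses over the same issue when invoking Wedin); and note that the identity $d(\hat{\mathcal S},\mathcal S^{(\ell)})=\|(\mat I-\mat P_{\hat{\mathcal S}})\mat U\|_F^2$ remains valid even if $\dim\hat{\mathcal S}<d$ under the natural extension of Eq.~(\ref{eq_angular_dist}), so this is a definitional quibble rather than a gap in your main chain of inequalities.
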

\begin{proof}
Suppose $\mat Y_d=(\vct y_{i_1}^{(\ell)}, \cdots, \vct y_{i_d}^{(\ell)})$ and $\mat X_d=(\vct x_{i_1}^{(\ell)}, \cdots, \vct x_{i_d}^{(\ell)})$.
By the noise model $\|\mat Y_d-\mat X_d\|_F^2 = \sum_{j=1}^d{\|\vct\varepsilon_{i_j}\|_2^2} \leq d\xi^2$.
On the other hand, by Assumption \ref{asmp_re} we have $\sigma_d(\mat X_d) \geq \sigma_\ell$.
Wedin's theorem (Lemma \ref{lem_wedin} in Appendix \ref{appsec:matpert}) then yields the lemma.
\end{proof}

In Lemma \ref{lem_support_lb} we show that if the restricted eigenvalue assumption holds
and the regularization parameter $\lambda$ is in a certain range,
the optimal solution to the Lasso problem in Eq. (\ref{eq_noisy_ssc}) has at least $d$ nonzero coefficients,
which lead to $|V_r|\geq d+1$ for every connected component $V_r$ in the similarity graph constructed in Algorithm \ref{alg_noisy_ssc}.
Lemma \ref{lem_support_lb} is a natural extension to the fact that at least $d$ points should be used to reconstruct a certain data point for noiseless inputs,
if the data matrix $\mat X$ is in general position.

\begin{lem}
Assume Assumption \ref{asmp_re} and the self-expressiveness property hold.
For each $i\in\{1,\cdots,N\}$, $\|\vct c_i\|_0\geq d$ if the regularization parameter $\lambda$ satisfies
\begin{equation}
2\xi(1+\xi)^2(1+1/\rho_\ell)
< \lambda
< \frac{\rho_\ell\sigma_\ell}{2},\quad \ell=1,\cdots,L.
\label{eq_lambda_range}
\end{equation}
\label{lem_support_lb}
\end{lem}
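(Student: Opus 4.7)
The plan is to argue by contradiction using the Lasso subgradient (KKT) conditions. Suppose for contradiction that the optimal Lasso solution $\vct c_i^*$ has support $S$ with $|S|\leq d-1$. By the self-expressiveness property, $S$ is contained in cluster $\ell:=z_i$, and the constraint $[\vct c_i^*]_{ii}=0$ guarantees $i\notin S$. The aim is to exhibit a point $j^*\ne i$ in cluster $\ell$ at which either the stationarity equation or the dual feasibility inequality of the Lasso KKT system is violated, where $\vct r:=\vct y_i-\mat Y\vct c_i^*$ denotes the noisy residual.

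First I would establish a geometric lower bound on the noiseless counterpart $\vct r_0:=\vct x_i-\mat X_S \vct c_{i,S}^*$. Since $|S\cup\{i\}|\leq d$, singular-value interlacing combined with Assumption \ref{asmp_re} yields $\sigma_{|S|+1}([\vct x_i,\mat X_S])\geq\sigma_\ell$, so Proposition \ref{prop_perp_norm} implies $\|\vct r_0\|_2\geq\mathrm{dist}(\vct x_i,\span(\mat X_S))\geq\sigma_\ell$. Applying Proposition \ref{prop_innerprod_lb} to the unit vector $\vct r_0/\|\vct r_0\|_2\in\mathcal S^{(\ell)}$ with exclusion index $i^*=i$ then produces some $j^*\ne i$ in cluster $\ell$ satisfying $|\langle \vct x_{j^*},\vct r_0\rangle|\geq\rho_\ell\|\vct r_0\|_2\geq\rho_\ell\sigma_\ell$.

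Second I would transfer this noiseless inequality onto the observable $\langle\vct y_{j^*},\vct r\rangle$. Writing $\vct r=\vct r_0+\vct\varepsilon$ with $\vct\varepsilon:=\vct\varepsilon_i-\mat E_S \vct c_{i,S}^*$, the discrepancy $|\langle\vct y_{j^*},\vct r\rangle-\langle\vct x_{j^*},\vct r_0\rangle|$ is bounded by $\eta:=\xi\|\vct r_0\|_2+(1+\xi)\|\vct\varepsilon\|_2$. To control $\eta$, the naive comparator $\vct c_i=\mathbf 0$ only yields $\|\vct c_i^*\|_1\leq(1+\xi)^2/(2\lambda)$, which is too crude since it blows up as $\lambda\to 0$. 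Instead I would use the inradius-based oracle $\vct c_i^{\mathrm{oracle}}$ guaranteed by the definition of $\rho_\ell$, satisfying $\vct x_i=\mat X\vct c_i^{\mathrm{oracle}}$, $\|\vct c_i^{\mathrm{oracle}}\|_1\leq 1/\rho_\ell$, and $[\vct c_i^{\mathrm{oracle}}]_i=0$. Plugging this into the Lasso objective as comparator shows $\|\vct c_i^*\|_1=O(1/\rho_\ell)$, hence $\|\vct\varepsilon\|_2=O(\xi/\rho_\ell)$ and $\|\vct r_0\|_2=O(1)$; combining with the hypotheses $\xi<\rho_\ell^2\sigma_\ell/(16(1+\rho_\ell))$ and $\lambda>2\xi(1+\xi)^2(1+1/\rho_\ell)$ should then force $\eta<\rho_\ell\sigma_\ell/2$.

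Finally I would close by dichotomy on $j^*$. If $j^*\notin S$, the KKT dual feasibility gives $|\langle\vct y_{j^*},\vct r\rangle|\leq\lambda$, whereas Steps 1--2 combined with the bound on $\eta$ produce $|\langle\vct y_{j^*},\vct r\rangle|\geq |\langle\vct x_{j^*},\vct r_0\rangle|-\eta\geq\rho_\ell\sigma_\ell-\eta>\rho_\ell\sigma_\ell/2>\lambda$, a contradiction. If instead $j^*\in S$, the stationarity equation forces $|\langle\vct y_{j^*},\vct r\rangle|=\lambda$, so $|\langle\vct x_{j^*},\vct r_0\rangle|\leq\lambda+\eta<\rho_\ell\sigma_\ell$, contradicting the lower bound from Step 1. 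The main obstacle is precisely the third paragraph: squeezing $\eta$ below $\rho_\ell\sigma_\ell/2$ requires the inradius-based oracle comparator to keep $\|\vct c_i^*\|_1$ bounded independently of $\lambda$ (rather than growing like $1/\lambda$), and the numerical constants in the theorem's hypotheses on $\xi$ and $\lambda$ appear to be tuned precisely so that this noise budget closes.
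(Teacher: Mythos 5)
Your proposal follows essentially the same route as the paper's proof: contradiction on the support size, the lower bound $\|\vct r_0\|_2\geq\sigma_\ell$ from Assumption~\ref{asmp_re} via Proposition~\ref{prop_perp_norm}, the inradius bound of Proposition~\ref{prop_innerprod_lb} to produce a witness $j^*$ with $|\langle\vct x_{j^*},\vct r_0\rangle|\geq\rho_\ell\sigma_\ell$, an $\ell_1$-norm bound on $\vct c_i^*$ via an inradius-based oracle comparator (which the paper imports from \cite{wang2015noisy}), and a KKT-based contradiction with the threshold $\lambda$. Your explicit dichotomy on whether $j^*\in S$ is a minor refinement of the paper's presentation (which states only the out-of-support optimality condition), and the needed bound $2\xi(1+\xi)^2(1+1/\rho_\ell)<\rho_\ell\sigma_\ell/2$ already follows from the nonemptiness of the $\lambda$ range, so the extra hypothesis on $\xi$ you invoke is not required; otherwise the argument is the same.
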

\begin{proof}
Because the self-expressiveness property holds,
we assume without loss of generality that
the support set of $\vct c_i$ with $\|\vct c_i\|_0=t$ is $\{\vct y_1^{(\ell)}, \cdots, \vct y_{t}^{(\ell)}\}$.
Assume by way of contradiction that $\|\vct c_i\|_0 < d$ and define $\vct y^\perp = \vct y_i^{(\ell)} - \sum_{j=1}^{d-1}{c_{i,j}\vct y_j^{(\ell)}}$,
where $c_{i,1}, \cdots, c_{i,d-1}$ contain all nonzero coefficients
\footnote{Some coefficients in $c_{i_1},\cdots,c_{i,d-1}$ might be zero because $\|\vct c_i\|_1$ could be smaller than $d-1$.}
 in $\vct c_i$.
Since $\vct c_i$ is optimal, the following must hold for every $\vct y_{i'}^{(\ell)}$ with $i'\neq i$:
\begin{equation}
\argmin_{c\in\mathbb R}\left\{\|\vct y^\perp - c\vct y_{i'}^{(\ell)}\|_2^2 + 2\lambda |c|\right\} = 0.
\label{eq_lasso_extra}
\end{equation}
To see the necessity of Eq. (\ref{eq_lasso_extra}), 
note that the optimal solution to Eq. (\ref{eq_lasso_extra}) $c^*\neq 0$ implies
$$\|\vct y_i^{(\ell)} - \mat Y_{-i}^{(\ell)}\tilde{\vct c}_i\|_2^2 + 2\lambda\|\tilde{\vct c}_i\|_1
\leq \|\vct y^{\perp} - c^*\vct y_{i'}^{(\ell)}\|_2^2 + 2\lambda|c^*| + 2\lambda\|\vct c_i\|_1
< \|\vct y^\perp\|_2^2 + 2\lambda\|\vct c_i\|_1
= \|\vct y_i^{(\ell)}-\mat Y_{-i}^{(\ell)}\vct c_i\|_2^2 + 2\lambda\|\vct c_i\|_1,$$
where $\tilde{\vct c}_i = \vct c_i + c^*\cdot \vct e_{i'}$.
This contradicts the optimality of $\vct c_i$ with respect to Eq. (\ref{eq_noisy_ssc}).

By optimality conditions, Eq. (\ref{eq_lasso_extra}) implies $|\langle\vct y^\perp,\vct y_{i'}^{(\ell)}\rangle| \leq\lambda$.
In the remainder of the proof we will show that under the assumptions made in Lemma \ref{lem_support_lb},
$|\langle\vct y^\perp,\vct y_{i'}^{(\ell)}\rangle| > \lambda$, which results in a contradiction.

In order to lower bound $|\langle\vct y^\perp,\vct y_{i'}^{(\ell)}\rangle|$ we first bound the noiseless version of the inner product
$|\langle\vct x^\perp,\vct x_{i'}^{(\ell)}\rangle|$, where $\vct x^\perp = \vct x_i^{(\ell)} - \sum_{j=1}^{d-1}{c_{i,j}\vct x_j^{(\ell)}}$.
A key observation is that $\vct x^\perp\in\mathcal S^{(\ell)}$ and hence by Proposition \ref{prop_innerprod_lb} and \ref{prop_perp_norm} the following
chain of inequality holds for any $\vct x_{i'}^{(\ell)}$ with $i'\neq i$:
\begin{equation}
\big|\langle\vct x^\perp,\vct x_{i'}^{(\ell)}\rangle\big|
\geq \rho_\ell\|\vct x^\perp\|_2
\geq \rho_\ell\left\|\vct x_i^{(\ell)} - \mathcal P_{\span(\vct x_{1:d-1}^{(\ell)})}(\vct x_i^{(\ell)})\right\|_2
\geq \rho_\ell\sigma_\ell.
\label{eq_x_lb}
\end{equation}

Our next objective is to upper bound the inner product perturbation $|\langle\vct y^\perp,	\vct y_{i'}^{(\ell)}\rangle-\langle\vct x^\perp,\vct x_{i'}^{(\ell)}\rangle|$
and subsequently obtain a lower bound on $|\langle\vct y^\perp,\vct y_{i'}^{(\ell)}\rangle|$.
Note that
$$
\langle\vct y^\perp,\vct y_{i'}^{(\ell)}\rangle = \langle\vct x^\perp,\vct x_{i'}^{(\ell)}\rangle + \langle\vct y^\perp-\vct x^\perp,\vct x_{i'}^{(\ell)}\rangle
+ \langle\vct x^\perp, \vct y_{i'}^{(\ell)}-\vct x_{i'}^{(\ell)}\rangle + \langle\vct y^\perp-\vct x^\perp, \vct y_{i'}^{(\ell)}-\vct x_{i'}^{(\ell)}\rangle;
$$
therefore,
\begin{equation}
\big|\langle\vct y^\perp,\vct y_{i'}^{(\ell)}\rangle - \langle\vct x^\perp,\vct x_{i'}^{(\ell)}\rangle\big|
\leq \|\vct y^\perp-\vct x^\perp\|\|\vct x_{i'}^{(\ell)}\| + \|\vct y^\perp\|\|\vct y_{i'}^{(\ell)}-\vct x_{i'}^{(\ell)}\|
\leq \|\vct y^\perp-\vct x^\perp\|_2 + \xi\|\vct y^\perp\|_2.
\label{eq_pert_ub}
\end{equation}

In order to upper bound $\|\vct y^\perp\|_2$ and $\|\vct y^\perp-\vct x^\perp\|_2$,
note that by definition $\|\vct y^\perp\|_2 = \|\vct y_1^{(\ell)} - \sum_{j=2}^d{c_{ij}\vct y_j^{(\ell)}}\|_2 \leq (1+\|\vct c_i\|_1)(1+\xi)$
and $\|\vct y^\perp-\vct x^\perp\|_2 = \|\vct\varepsilon_1^{(\ell)} - \sum_{j=2}^d{c_{ij}\vct y_j^{(\ell)}}\|_2 \leq \xi(1+\|\vct c_i\|_1)$.
Hence we only need to upper bound $\|\vct c_i\|_1$, which can be done by the following argument due to the optimality of $\vct c_i$:
By arguments on page 21 in \cite{wang2015noisy}, the following upper bound on $\|\vct c_i\|_1$ is proven:
\begin{equation}
\|\vct c_i\|_1 \leq \frac{1}{\rho_\ell} + \frac{\xi^2}{\lambda}\left(1+\frac{1}{\rho_\ell}\right)^2.
\label{eq_ci_ub_prelim}
\end{equation}
The lower bound on $\lambda$ in Eq. (\ref{eq_lambda_range}) implies that $\xi < \lambda(1+1/\rho_\ell)$.
Plugging this upper bound into Eq. (\ref{eq_ci_ub_prelim}) we obtain
\begin{equation}
\|\vct c_i\|_1 \leq 1/\rho_\ell + \xi(1+1/\rho_\ell) \leq (1+\xi)(1+1/\rho_\ell),
\end{equation}
which eliminates the dependency on $\lambda$.
We now substitute the simplified upper bound on $\|\vct c_i\|_1$ into the upper bound for $\|\vct y^\perp\|_2$, $\|\vct y^\perp-\vct x^\perp\|_2$
and get
\begin{equation}
\|\vct y^\perp\|_2 \leq (1+\xi)^2(1+1/\rho_\ell);\quad
\|\vct y^\perp-\vct x^\perp\|_2 \leq \xi(1+\xi)(1+1/\rho_\ell).
\label{eq_perp_ub_real}
\end{equation}

Combining Eq. (\ref{eq_x_lb}), (\ref{eq_pert_ub}) and (\ref{eq_perp_ub_real}) we obtain the following lower bound on $|\langle\vct y^\perp,\vct y_{i'}^{(\ell)}\rangle|$:
\begin{equation}
\big|\langle\vct y^\perp,\vct y_{i'}^{(\ell)}\rangle\big|
\geq \rho_\ell\sigma_\ell - 2\xi(1+\xi)^2(1+1/\rho_\ell) \geq \frac{1}{2}\rho_\ell\sigma_\ell,
\end{equation}
where the last inequality is due to the assumption that $2\xi(1+\xi)^2(1+1/\rho_\ell) < \frac{1}{2}\rho_\ell\sigma_\ell$ implied by Eq. (\ref{eq_lambda_range}).
Finally, since $\frac{1}{2}\rho_\ell\sigma_\ell > \lambda$ as assumed in Eq. (\ref{eq_lambda_range}),
we have $|\langle\vct y^\perp,\vct y_{i'}^{(\ell)}\rangle| > \lambda$,
which results in the desired contradiction.
\end{proof}

Finally, Theorem \ref{thm_noisy_ssc} is a simple consequence of Lemma \ref{lem_angdist_ub} and \ref{lem_support_lb}
because under the conditions of Lemma \ref{lem_support_lb}, every component $V_r$ will have at least $d$ data points.
Define $\mu_\epsilon=\sqrt{2d\xi^2/\min_\ell\sigma_\ell^2}$.
Lemma \ref{lem_angdist_ub} implies that $d(\hat{\mathcal S}_{(r)}, \hat{\mathcal S}_{(r')})\leq\mu_\epsilon$
if $V_r$ and $V_{r'}$ belong to the same cluster.
On the other hand, by the separation condition in Eq. (\ref{eq_subspace_sep}) and Lemma \ref{lem_angdist_ub},
if $V_r$ and $V_{r'}$ belong to different clusters we would have $d(\hat{\mathcal S}_{(r)},\hat{\mathcal S}_{(r')}) > \mu_\epsilon$.
Therefore, the single-linkage clustering procedure in Algorithm \ref{alg_noisy_ssc} will eventually merge estimated subspaces correectly.
%As a result, Lemma \ref{lem_angdist_ub} implies $d(\hat{\mathcal S}_{(r)},\hat{\mathcal S}_{(r')}) \leq \mu_\epsilon$
%if $V_r$ and $V_{r'}$ belong to the same cluster.
%On the other hand, by the separation condition in Eq. (\ref{eq_subspace_sep}) and Lemma \ref{lem_angdist_ub},
%if $V_r$ and $V_{r'}$ belong to different clusters we would have $d(\hat{\mathcal S}_{(r)},\hat{\mathcal S}_{(r')}) > \mu_\epsilon$.
%Therefore, the merging procedure in Algorithm \ref{alg_noisy_ssc} never makes mistakes.

\section{MATRIX PERTURBATION THEOREMS}\label{appsec:matpert}

\begin{lem}[Wedin's theorem; Theorem 4.1, pp. 260 in \cite{mat-pert-theory}]
Let $\mat A,\mat E\in\mathbb R^{m\times n}$ be given matrices with $m\geq n$.
Let $\mat A$ have the following singular value decomposition
$$
\left[\begin{array}{c}\mat U_1^\top\\ \mat U_2^\top\\ \mat U_3^\top\end{array}\right]
\mat A
\left[\begin{array}{cc} \mat V_1& \mat V_2\end{array}\right]
= \left[\begin{array}{cc} \mat\Sigma_1& \mat 0\\ \mat 0& \mat\Sigma_2\\ \mat 0& \mat 0\end{array}\right],
$$
where $\mat U_1,\mat U_2,\mat U_3,\mat V_1,\mat V_2$ have orthonormal columns and $\mat\Sigma_1$ and $\mat\Sigma_2$ are diagonal matrices.
Let $\widetilde{\mat A} = \mat A+\mat E$ be a perturbed version of $\mat A$ and
$(\widetilde{\mat U}_1,\widetilde{\mat U}_2,\widetilde{\mat U}_3,\widetilde{\mat V}_1,\widetilde{\mat V}_2,\widetilde{\mat\Sigma}_1,\widetilde{\mat\Sigma}_2)$
be analogous singular value decomposition of $\widetilde{\mat A}$.
Let $\mat\Phi$ be the matrix of canonical angles between $\range(\mat U_1)$ and $\range(\widetilde{\mat U}_1)$
and $\mat\Theta$ be the matrix of canonical angles between $\range(\mat V_1)$ and $\range(\widetilde{\mat V}_1)$.
If there exists $\delta>0$ such that
$$
\min_{i,j}\big|[\mat\Sigma_1]_{i,i} - [\mat\Sigma_2]_{j,j}\big| > \delta\;\;\text{and}\;\;
\min_{i}\big|[\mat\Sigma_1]_{i,i}\big| > \delta,
$$
then
$$
\|\sin\mat\Phi\|_F^2 + \|\sin\mat\Theta\|_F^2 \leq \frac{2\|\mat E\|_F^2}{\delta^2}.
$$
\label{lem_wedin}
\end{lem}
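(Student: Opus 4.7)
This is a classical sin-$\Theta$ type result, and the plan is to follow the standard ``coupled Sylvester equation'' argument.

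The first step is to derive the basic residual identities. Writing $\widetilde{\mat A} = \mat A + \mat E$, I would compute
\begin{align*}
\widetilde{\mat A}\mat V_1 &= \mat U_1\mat\Sigma_1 + \mat E\mat V_1,\\
\widetilde{\mat A}^\top \mat U_1 &= \mat V_1\mat\Sigma_1 + \mat E^\top \mat U_1,
\end{align*}
using only the partial SVD of $\mat A$. At the same time, the SVD of $\widetilde{\mat A}$ gives $\widetilde{\mat U}_2^\top \widetilde{\mat A} = \widetilde{\mat\Sigma}_2\widetilde{\mat V}_2^\top$ and a corresponding identity on the right singular side (after absorbing the block $\widetilde{\mat U}_3$ into the null part, which contributes a $0$ singular value).

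The second step is to project. Let $\mat P := \widetilde{\mat U}_2^\top \mat U_1$ and $\mat Q := \widetilde{\mat V}_2^\top \mat V_1$. Left multiplying the two residual identities by $\widetilde{\mat U}_2^\top$ and $\widetilde{\mat V}_2^\top$ respectively, I obtain the coupled Sylvester system
\begin{align*}
\widetilde{\mat\Sigma}_2 \mat Q - \mat P\mat\Sigma_1 &= \widetilde{\mat U}_2^\top \mat E\mat V_1,\\
\widetilde{\mat\Sigma}_2 \mat P - \mat Q\mat\Sigma_1 &= \widetilde{\mat V}_2^\top \mat E^\top \mat U_1.
\end{align*}
One also needs an analogous identity involving $\widetilde{\mat U}_3$ to account for all of the left complementary subspace; this contributes a term bounded in the same way because the corresponding ``singular values'' are $0$ and are separated from $\mat\Sigma_1$ by $\min_i|[\mat\Sigma_1]_{ii}| > \delta$, which is exactly where the second hypothesis is used.

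The third step is to solve the coupled system. Adding and subtracting the two equations decouples them into scalar equations of the form $[\widetilde{\mat\Sigma}_2]_{jj}(\mat P\pm\mat Q)_{ji} - [\mat\Sigma_1]_{ii}(\mat P\pm\mat Q)_{ji} = \text{(RHS)}_{ji}$. By Weyl's inequality, $|[\widetilde{\mat\Sigma}_2]_{jj} - [\mat\Sigma_2]_{jj}| \leq \|\mat E\|_2 \leq \|\mat E\|_F$, so the gap $|[\mat\Sigma_1]_{ii}-[\mat\Sigma_2]_{jj}|>\delta$ stated in the hypothesis transfers to a gap between $[\mat\Sigma_1]_{ii}$ and $[\widetilde{\mat\Sigma}_2]_{jj}$ in the regime where the bound is non-vacuous. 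Dividing entrywise and summing in Frobenius norm yields
\[
\|\mat P\|_F^2 + \|\mat Q\|_F^2 \;\leq\; \frac{\|\widetilde{\mat U}_2^\top \mat E\mat V_1\|_F^2 + \|\widetilde{\mat V}_2^\top \mat E^\top \mat U_1\|_F^2}{\delta^2} \;\leq\; \frac{2\|\mat E\|_F^2}{\delta^2},
\]
where the last step uses that projection onto orthonormal columns never increases the Frobenius norm.

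The final step is geometric: identify $\|\mat P\|_F^2 = \|(\mat I - \widetilde{\mat U}_1\widetilde{\mat U}_1^\top)\mat U_1\|_F^2 = \|\sin\mat\Phi\|_F^2$ (absorbing any $\widetilde{\mat U}_3$ block as described above), and likewise $\|\mat Q\|_F^2 = \|\sin\mat\Theta\|_F^2$, via the standard characterization of canonical angles between two subspaces with orthonormal bases. Combining with the bound from Step 3 gives the claim.

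The main technical obstacle I expect is the bookkeeping around the third block $\widetilde{\mat U}_3$ (which appears because $m\geq n$ is allowed to be strict) and the transfer of the gap condition from the unperturbed $\mat\Sigma_2$ in the statement to the perturbed $\widetilde{\mat\Sigma}_2$ that naturally appears in the Sylvester system; both are handled via Weyl's inequality together with the second hypothesis $\min_i|[\mat\Sigma_1]_{ii}|>\delta$.
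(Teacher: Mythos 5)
The paper offers no proof of this lemma: it is imported verbatim as a classical result (Theorem 4.1, p.~260 of \cite{mat-pert-theory}) and used only as a black box in the proof of Lemma \ref{lem_angdist_ub}, so there is no in-paper argument to compare yours against. Your sketch is the standard textbook proof of Wedin's $\sin\Theta$ theorem --- form the residuals $\widetilde{\mat A}\mat V_1 - \mat U_1\mat\Sigma_1 = \mat E\mat V_1$ and $\widetilde{\mat A}^\top\mat U_1 - \mat V_1\mat\Sigma_1 = \mat E^\top\mat U_1$, project onto the complementary perturbed singular subspaces to get a coupled Sylvester system in $\mat P=\widetilde{\mat U}_2^\top\mat U_1$ and $\mat Q=\widetilde{\mat V}_2^\top\mat V_1$, invert it using the spectral gap, and identify $\|\mat P\|_F^2+\|\widetilde{\mat U}_3^\top\mat U_1\|_F^2=\|\sin\mat\Phi\|_F^2$ --- and the structure is sound, including the correct observation that the $\widetilde{\mat U}_3$ block is controlled by $\min_i|[\mat\Sigma_1]_{ii}|>\delta$ alone and that $\|\widetilde{\mat U}_2^\top\mat E\mat V_1\|_F^2+\|\widetilde{\mat U}_3^\top\mat E\mat V_1\|_F^2\leq\|\mat E\|_F^2$ so the two left-complement contributions do not double-count.

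The one point you should not gloss over is the gap transfer in your Step 3. The Sylvester system naturally involves $\widetilde{\mat\Sigma}_2$, while the hypothesis as stated separates the \emph{unperturbed} $\mat\Sigma_1$ from the \emph{unperturbed} $\mat\Sigma_2$. Your Weyl argument only yields $|[\mat\Sigma_1]_{ii}-[\widetilde{\mat\Sigma}_2]_{jj}|>\delta-\|\mat E\|_2$, so the argument as written proves the bound with denominator $(\delta-\|\mat E\|_2)^2$ rather than $\delta^2$, and the appeal to ``the regime where the bound is non-vacuous'' does not rescue the constant, since $\|\sin\mat\Phi\|_F^2$ can be as large as the dimension of $\range(\mat U_1)$. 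This is really a looseness in the lemma statement as transcribed here: Stewart and Sun state the hypothesis on the mixed spectra (separating the perturbed singular values of one block from the unperturbed ones of the other) precisely so that no transfer is needed. If you run your argument with that hypothesis, every denominator is $>\delta$ exactly and the stated constant $2$ comes out. For the paper's actual application (Lemma \ref{lem_angdist_ub}) the $\mat\Sigma_2$ block is empty, so only the condition $\min_i|[\mat\Sigma_1]_{ii}|>\delta$ is ever invoked and the issue is moot there.
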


\end{appendices}

\end{document}